\newcommand\coolrightbrace[2]{%
\left.\vphantom{\begin{matrix}%
#1 \end{matrix}}\right\}#2}
\newcommand{\mx}[1]{\boldsymbol{\mathbf{#1}}}
\newcommand{\operator}[1]{\mathcal{#1}}
\newcommand{\cone}[1]{\text{cone}(#1)}
\newcommand{\coneT}[1]{\emph{\text{cone}}(#1)}
\newcommand{\R}{\mathbb{R}}
\newcommand{\N}{\mathbb{N}}
\theoremstyle{plain}
\newtheorem{theo}{Theorem}
\newtheorem{lemma}[theo]{Lemma}
\newtheorem{proposition}[theo]{Proposition}
\theoremstyle{definition}
\newtheorem{definition}{Definition}
\newtheorem*{rem}{Remark}
\title{Nonnegative matrix factorization with side information for time series recovery and prediction}
\author{Jiali~Mei
        \and Yohann~De~Castro
        \and Yannig~Goude
        \and Jean-Marc~Azaïs
        \and Georges~Hébrail
\thanks{ J. Mei, Y. Goude and G. Hébrail are with EDF Lab Paris-Saclay, 91120 Palaiseau, France.
 J. Mei, Y. De Castro, and Y. Goude are with Laboratoire des Mathématiques d'Orsay, Univ.
Paris-Sud, CNRS, Université Paris-Saclay, 91405 Orsay, France.
 J-M. Azaïs is with Institut de Mathématiques, Université Paul Sabatier 31062 Toulouse, France}
}
\begin{document}
\maketitle

\begin{abstract}
Motivated by the reconstruction and the prediction of electricity consumption, we extend Nonnegative Matrix Factorization~(NMF) to take into account side information (column or row features).
We consider general linear measurement settings, and propose a framework which models non-linear relationships between features and the response variables.
We extend previous theoretical results to obtain a sufficient condition on the identifiability of the NMF in this setting.
Based the classical Hierarchical Alternating Least Squares~(HALS) algorithm, we propose a new algorithm (HALSX, or Hierarchical Alternating Least Squares with eXogeneous variables) which estimates the factorization model.
The algorithm is validated on both simulated and real electricity consumption datasets as well as a recommendation dataset, to show its performance in matrix recovery and prediction for new rows and columns.
\end{abstract}

\section{Introduction}\label{sec:introduction}



In recent years, a large number of methods have been developed to improve matrix completion methods using side information \cite{jain_provable_2013,rao_collaborative_2015,si_goal-directed_2016}.
By including features linked to the row and/or columns of a matrix, also called ``side information'', these methods have a better performance at estimating the missing entries.

In this work, we generalize this idea to nonnegative matrix factorization (NMF, \cite{donoho_when_2003}).
Although more difficult to identify, NMF often has a better empirical performance, and provides factors that are more interpretable in an appropriate context.
We propose an NMF method that takes into account side information.
Given some observations of a matrix, this method jointly estimates the nonnegative factors of the matrix, and regression models of these factors on the side information.
This allows us to improve the matrix recovery performance of NMF.
Moreover, using the regression models, we can predict the value of interest for new rows and columns that are previously unseen.
We develop this method in the general matrix recovery context, where linear measurements are observed instead of matrix entries.

This choice is especially motivated by applications  in electricity consumption.
We are interested in estimating and predicting the electricity load from temporal aggregates. 
In the context of load balancing in the power market, electric transmission system operators (TSO) of the electricity network are typically legally bound to estimate the electricity consumption and production at a small temporal scale (half-hourly or hourly), for market participants within their perimeter, \textit{i.e.} utility providers, traders, large consumers, groups of consumers, \textit{etc.} \cite{mei_nonnegative_2017}. 
Most traditional electricity meters do not provide information at such a fine temporal scale. 
Although smart meters can record consumption locally every minute, the usage of such data can be extremely constrained for TSOs, because of the high cost of data transmission and processing and/or privacy issues.
Nowadays, TSOs often use regulator-approved proportional consumption profiles to estimate market participants' load.
In a previous work by the authors
\cite{mei_nonnegative_2017}, we proposed to solve the estimation problem by NMF using temporal aggregate data.

Using the method developed in this article, we put in parallel temporal aggregate data with features that are known to have a correlation with electricity consumption, such as the temperature, the day of the week, or the type of client.
This not only improves the performance of load estimation, but also allows us to predict future load for users in the dataset, and estimate and predict the consumption of new users previously unseen.
In electrical power networks, load prediction for new periods is useful for balancing offer-demand on the network, and prediction for new individuals is useful for network planning.

In the rest of this section, we introduce the general framework of this method, and the related literature.
In Section \ref{sec:identifiability}, we deduce a sufficient condition on the side information for the NMF to be unique.
In Section \ref{sec:algorithm}, we present HALSX, an algorithm which solves the NMF with side information problem, that we prove to converge to stationary points.
In Section \ref{sec:experiments}, we present experimental results, applying HALSX on simulated and real datasets, both for the electricity consumption application, and for a standard task in collaborative filtering.

\subsection{General model definition\label{sec:general model}}

We are interested in reconstructing a nonnegative matrix~$\mx{V}^\ast \in \R_+^{n_1 \times n_2}$, from~$N$ linear measurements,
\begin{align}
\mx{\alpha} = \operator{A}(\mx{V}^\ast) \in \R^{N},
\end{align}
where~$\operator{A} : \R^{n_1 \times n_2} \rightarrow \R^{N}$ is a linear operator.
Formally,~$\operator{A}$ can be represented by~$\mx{A}_1$, ...,~$\mx{A}_N$,~$N$ design matrices of dimension~$n_1 \times n_2$, and each linear measurement can be represented by
\begin{align}
\alpha_i = \text{Tr}(\mx{V}^\ast \mx{A}_i^T) = \langle \mx{V}^\ast, \mx{A}_i \rangle.
\end{align}
The design matrices~$\mx{A}_1$, ...,~$\mx{A}_N$ are called \textit{masks}.

Moreover, we suppose that the matrix of interest,~$\mx{V}^\ast$, stems from a generative low-rank nonnegative model, in the following sense:
\begin{enumerate} 
  \item The matrix~$\mx{V}^\ast$ is of \textit{nonnegative rank~$k$}, with~$k \ll n_1, n_2$. 
  This means, $k$ is the smallest number so that we can find two nonnegative matrices~$\mx{F}_r \in \R_+^{n_1 \times k}~$ and~$\mx{F}_c \in \R_+^{n_2 \times k}$ satisfying 
      \begin{align*}
        \mx{V}^\ast = \mx{F}_r \mx{F}_c^T.
      \end{align*}
  Note that this implies that~$\mx{V}^\ast$ is of rank at most~$k$, and therefore is of low rank.
  \item There are some row features~$\mx{X}_r \in \R^{n_1 \times d_1}$ and column features~$\mx{X}_c \in \R^{n_2 \times d_2}$ connected to each row and column of~$\mx{V}^\ast$. 
      We note by~$\mx{x}_r^i$ the~$i$-th row of~$\mx{X}_r$, and by~$\mx{x}_c^i$ the~$i$-th row of~$\mx{X}_c$.
      There are two link functions~$f_r: \R^{d_1} \rightarrow \R^k$ and~$f_c: \R^{d_1} \rightarrow \R^k$, so that
      \begin{align*}
        &\mx{F}_r = (f_r(\mx{X}_r))_+, \\
        &\mx{F}_c = (f_c(\mx{X}_c))_+,
      \end{align*}
      where~$f_r(\mx{X}_r) \in \R^{n_1 \times k}$ is the matrix obtained by stacking row vectors~$f_r(\mx{x}_r^i)$, for~$1 \leq i \leq n_1$ (\textit{idem} for~$f_c(\mx{X}_c) \in \R^{n_n \times k}$), and~$(\cdot)_+$ is the ramp function which corresponds to thresholding operation at 0 for any matrix or vector.
\end{enumerate}

In this general setting, the features~$\mx{X}_r$ and~$\mx{X}_c$, the measurement operator~$\operator{A}$, and the measurements~$\mx{\alpha}$ are observed.
The objective is to estimate the true matrix~$\mx{V}^\ast$ as well as the factor matrices~$\mx{F}_r$ and~$\mx{F}_c$, by estimating the link functions~$f_r$ and~$f_c$.

To obtain a solution to this matrix recovery problem, we minimize the quadratic error of the matrix factorization. 
In Section \ref{sec:algorithm}, we will propose an algorithm for the following optimization problem:
\begin{equation}\label{eq:general_optimization_problem}
\begin{aligned}
\min_{\mx{V},f_r \in F_r^k, f_c \in F_c^k}\quad & \|\mx{V} - (f_r(\mx{X}_r))_+ (f_c(\mx{X}_c))_+^T\|_F^2\\
 \text{s.t.}\quad & \operator{A}(\mx{V}) = \mx{b}, \quad \mx{V} \geq \mx{0},
\end{aligned}
\end{equation}
where~$F_r \subseteq (\R)^{\R^{d_1}}$ and~$F_c \subseteq (\R)^{\R^{d_2}}~$ are some functional spaces in which the row and column link functions are to be searched.

By specializing~$\mx{X}_r$,~$\mx{X}_c$, and~$\operator{A}$, or restricting the search space of~$f_r$ and~$f_c$, this general model includes a number of interesting applications, old and new.

\subsubsection*{The masks~$\mx{A}_1$, ...,~$\mx{A}_N$}
\begin{itemize}
\item \textbf{Complete observation}:~$N = n_1 n_2, \mx{A}_{i_1, i_2} = \mx{e}_{i_1} \mx{e}_{i_2}^T$, where~$\mx{e}_{i}$ is the~$i$-th canonical vector. This means every entry of~$\mx{V}^\ast$ is observed.
\item \textbf{Matrix completion}: the set of masks is a subset of complete observation masks, with~$N < n_1 n_2~$.
\item \textbf{Matrix sensing}: the design matrices~$\mx{A}_i$ are random matrices, sampled from a certain probability distribution. 
Typically, the probability distribution needs to verify certain conditions, so that with a large probability,~$\operator{A}$ verifies the Restricted Isometry Property \cite{recht_guaranteed_2010}.
\item \textbf{Rank-one projections} \cite{cai_rop:_2015,zuk_low-rank_2015}: the design matrices are random rank-one matrices, that is $\mx{A}_i = \mx{\alpha}_i \mx{\beta}_i^T$, where $\mx{\alpha}_i$ and $\mx{\beta}_i$ are respectively random vectors of dimension $n_1$ and $n_2$.
The main advantage to this setting is that much less memory is needed to store the masks, since we can store the vectors $\mx{\alpha}_i$ and $\mx{\beta}_i$ (dimension-$(n_1 + n_2)$) instead of $\mx{A}_i$ (dimension-$(n_1\times n_2)$).
In \cite{cai_rop:_2015,zuk_low-rank_2015}, theoretical properties are proved for the case where $\mx{\alpha}_i$ and $\mx{\beta}_i$ are vectors with independent Gaussian entries and/or drawn uniformly from the vectors of the canonical basis.
\item \textbf{Temporal aggregate measurements}: in this case, the matrix is composed of~$n_1$ time series concerning~$n_2$ individuals, and each measure is a temporal aggregate of the time series of an individual. 
The design matrices are defined as
${\mx{A}_i = \sum_{t = t_0(i) + 1}^{t_0(i) + h(i)}\mx{e}_t \mx{e}_{s_i}^T}$,
where~$s_i$ is the individual concerned by the~$i$-th measure,~$t_0(i) + 1$ the first period covered by the measure, and~$h(i)$ the number of periods covered by the measure.
\end{itemize}

\subsubsection*{The features~$\mx{X}_r$ and~$\mx{X}_c$}
\begin{itemize}
\item \textbf{Individual features}:~$\mx{X}_r = \mx{I}_{n_1}, \mx{X}_c = \mx{I}_{n_2}$. Basically, no side information is available. The row individuals and column individuals are each different.
\item \textbf{General numeric features}:~$\mx{X}_r \in \R^{n_1 \times d_1}$ and~$\mx{X}_c \in \R^{n_2 \times d_2}$. This includes all numeric features.
\item \textbf{Features generated from a kernel}: certain information about the row and column individuals may not be in the form of a numeric vector. 
For example, if the row individuals are vertices of a graph, their connection to each other is interesting information for the problem, but it is difficult to encode as real vectors. 
In this case, features can be generated through a transformation, or by defining a kernel function.
\end{itemize}

\subsubsection*{The link functions~$f_r$ and~$f_c$}
\begin{itemize}
\item \textbf{Linear}:~$\mx{F}_r = f_r(\mx{X}_r) = \mx{X}_r \mx{B}_r$, and~$\mx{F}_c =f_c(\mx{X}_c) = \mx{X}_c \mx{B}_c$. 
In this case, we need to estimate~$\mx{B}_r$ and~$\mx{B}_c$ to fit the model.
With identity matrices as row and column features, this case is reduced to the traditional matrix factorization model with
\begin{align*}
\mx{F}_r = \mx{B}_r, \quad \mx{F}_c = \mx{B}_c, \quad
\mx{V}^\ast = \mx{F}_r \mx{F}_c^T = \mx{B}_r \mx{B}_c^T.
\end{align*}
When the features are generated from a kernel function, even a linear link function permits non-linear relationship between the features and the factor matrices.
\item \textbf{General regression models}: when the relationship between the features and the variable of interest is not linear, any off-the-shelf regression methods can be plugged in to search for a non-linear link function.
\end{itemize}

\subsubsection*{The choice of the optimization problem (\ref{eq:general_optimization_problem})}

Notice that with individual row and column features, linear link functions and complete observations, (\ref{eq:general_optimization_problem}) becomes 
\begin{equation}\label{eq:NMF_bis}
\begin{aligned}
\min_{\mx{F}_r, \mx{F}_c}\quad & \|\mx{V} - (\mx{F}_r)_+ (\mx{F}_c)_+^T\|_F^2.
\end{aligned}
\end{equation}
This is equivalent to the classical NMF problem,
\begin{equation}\label{eq:NMF}
\begin{aligned}
\min_{\mx{F}_r, \mx{F}_c}\quad & \|\mx{V} - \mx{F}_r (\mx{F}_c)^T\|_F^2\\
 \text{s.t.} \quad & \mx{F}_r \geq \mx{0}, \quad \mx{F}_c \geq \mx{0},
\end{aligned}
\end{equation}
in the sense that for any solution~$(\mx{E}_r, \mx{E}_c)$ to (\ref{eq:NMF_bis}),~$((\mx{E}_r)_+, (\mx{E}_c)_+)$ is a solution to (\ref{eq:NMF}).

A more immediate generalization of (\ref{eq:NMF}) to include exogenous variables would be in the form
\begin{equation}\label{eq:general_optimization_problem_naive}
\begin{aligned}
\min_{\mx{V},f_r \in F_r^k, f_c \in F_c^k}\quad & \|\mx{V} - f_r(\mx{X}_r) (f_c(\mx{X}_c))^T\|_F^2\\
 \text{s.t.}\quad & \operator{A}(\mx{V}) = \mx{b}, \quad \mx{V} \geq \mx{0}, \\
 & f_r(\mx{X}_r) \geq \mx{0}, \quad f_c(\mx{X}_c) \geq \mx{0}.
\end{aligned}
\end{equation}
Solving (\ref{eq:general_optimization_problem_naive}) would involve identifying the subset of~$F_r$ and~$F_c$ that only produce nonnegative value on the row and column features, which could be difficult.

\begin{table*}[!ht]
\centering
\caption{Classification of matrix factorization with side information by the mask, the link function, and the features included as side information, with some problems previously addressed in the literature.\label{table:classification}}
{\small
\begin{tabular}{lrp{2.3cm}p{2.3cm}p{2.3cm}p{2.3cm}}
\toprule
 & \textbf{Link function} & \multicolumn{3}{c}{Linear} & Other regression methods \\ 
\cmidrule(lr){2-2} \cmidrule(lr){3-5} \cmidrule(lr){6-6}
 & \textbf{Features} & Identity & General numeric features &  Kernel features & General numeric features \\ 
\midrule
\multirow{5}{*}{\rotatebox[origin=c]{90}{\textbf{Mask}}} & Identity  & Matrix factorization  & Reduced-regression rank \cite{velu_multivariate_2013,bunea_joint_2012,chen_sparse_2012} & Multiple kernel learning \cite{kekatos_electricity_2014} & Nonparametric RRR \cite{foygel_nonparametric_2012} \\
\cmidrule(lr){2-6}
& Matrix completion  & Matrix completion \cite{candes_exact_2009} & IMC\cite{agarwal_regression-based_2009,jain_provable_2013,xu_speedup_2013,chiang_matrix_2015} & GRMF \cite{abernethy_new_2009,rao_collaborative_2015,si_goal-directed_2016} &  \\
\cmidrule(lr){2-6}
& Rank-one projections  & \cite{cai_rop:_2015,zuk_low-rank_2015} &  &  &  \\
\cmidrule(lr){2-6}
& Temporal aggregates  & \cite{mei_nonnegative_2017} &  &  &  \\
\cmidrule(lr){2-6}
& General masks  & Matrix recovery \cite{candes_tight_2011,recht_guaranteed_2010} &  &  &  \\
\bottomrule
\end{tabular}
}
\end{table*}

By using (\ref{eq:general_optimization_problem}), we actually shifted the search space~$F_r$ and~$F_c$ to~$(F_r)_+$ and~$(F_c)_+$ which consists of composing all functions of~$F_r$ and~$F_c$ with the ramp function (thresholding at 0). In a word, (\ref{eq:general_optimization_problem}) is equivalent to 
\begin{equation}\label{eq:general_optimization_problem_+}
\begin{aligned}
\min_{\mx{V},f_r \in (F_r)_+^k, f_c \in (F_c)_+^k}\quad & \|\mx{V} - f_r(\mx{X}_r) (f_c(\mx{X}_c))^T\|_F^2\\
 \text{s.t.}\quad & \operator{A}(\mx{V}) = \mx{b}, \quad \mx{V} \geq \mx{0}.
\end{aligned}
\end{equation}
Problem (\ref{eq:general_optimization_problem_+}) also helps us to reason on the identifiability of (\ref{eq:general_optimization_problem}).
In a sense, (\ref{eq:general_optimization_problem}) is not well-identified: two distinct elements in~$F_c^k \times F_r^k$ have the same evaluation value of the objective function, if they only differ on their negative parts.
In fact, this does not affect the interpretation of the model, because these distinct elements correspond to the same element in~$(F_r)_+^k \times (F_c)_+^k$.
Since we are only going to use the positive parts of the function both in recovery and prediction, this becomes a parameterization choice which has no consequence on the applications.

As a comparison, we also propose an algorithm for the following optimization problem in Section \ref{sec:algorithm}:
\begin{equation}\label{eq:general_optimization_problem2}
\begin{aligned}
\min_{f_r \in F_r^k, f_c \in F_c^k}\quad & \|\mx{b} - \operator{A}((f_r(\mx{X}_r))_+ (f_c(\mx{X}_c))_+^T)\|_2^2.
\end{aligned}
\end{equation}
Instead of minimizing the low-rank approximation error for a matrix that matches the data as a linear matrix equation in (\ref{eq:general_optimization_problem}), (\ref{eq:general_optimization_problem2}) minimizes the sampling error of an exactly low-rank matrix.
Both have been studied in the literature.
For example, objective functions similar to (\ref{eq:general_optimization_problem}) have been considered in \cite{recht_guaranteed_2010}, and ones similar to (\ref{eq:general_optimization_problem2}) in \cite{roughan_spatio-temporal_2012}.
We will see in both Sections~\ref{sec:algorithm} and~\ref{sec:experiments} that~(\ref{eq:general_optimization_problem}) has a better performance than (\ref{eq:general_optimization_problem2}).

\subsection{Prior works}

Table \ref{table:classification} shows a taxonomy of matrix factorization models with side information, by the mask, the link function and the features used as side information.
A number of problems in this taxonomy has been addressed in established literature.

There is an abundant literature that studies the general matrix factorization problem under various measurement operators, 
when no additional information is provided 
(see \cite{rohde_estimation_2011,candes_exact_2009,recht_guaranteed_2010,zuk_low-rank_2015} for various masks considered, and \cite{bhojanapalli_global_2016} for a recent global convergence result with RIP measurements).
The NMF with general linear measurements is studied in various applications \cite{roughan_spatio-temporal_2012,pnevmatikakis_sparse_2013,mei_nonnegative_2017}.

On the other hand, with complete observations, the multiple regression problem taking into account the low-rank structure of the (multi-dimensional) variable of interest is known as reduced-rank regression.
This approach was first developed very early (see \cite{velu_multivariate_2013} for a review).
Recent developments on rank selection \cite{bunea_joint_2012}, adaptive estimation procedures \cite{chen_reduced_2013}, using non-parametric link function \cite{foygel_nonparametric_2012}, often draw the parallel between reduced-rank regression and the matrix completion problem.
However, measurement operators other than complete observations or matrix completion are rarely considered in this community.

Building on theoretical boundaries on matrix completion, the authors of \cite{jain_provable_2013,xu_speedup_2013,chiang_matrix_2015} showed that by providing side information (the matrix~$\mx{X}$), the number of measurements needed for exact matrix completion can be reduced. 
Moreover, the number of measurements necessary for successful matrix completion can be quantified by measuring the quality of the side information \cite{chiang_matrix_2015}.

Collaborative filtering with side information has received much attention from practitioners and academic researchers alike, for its huge impact in e-commerce and web applications \cite{agarwal_regression-based_2009,jain_provable_2013,xu_speedup_2013,chiang_matrix_2015,rao_collaborative_2015,si_goal-directed_2016}.
One of the first methods for including side information in collaborative filtering systems (matrix completion masks) was proposed by \cite{abernethy_new_2009}. 
The authors generalized collaborative filtering into a operator estimation problem. 
This method allows more general feature spaces than a numerical matrix, by applying a kernel function to side information. 
\cite{si_goal-directed_2016} proposed choosing the kernel function based on the goal of the application. 
\cite{kekatos_electricity_2014} applied the kernel-based collaborative filtering framework to electricity price forecasting. 
Their kernel choice is determined by multi-kernel learning methods.

To the best of our knowledge, matrix factorization (nonnegative or not) with side information, from general linear measurements has rarely been considered, nor is general non-linear functions other than with features obtained from kernels.
This article aims at proposing a general approach which fills this gap.

\section{Identifiability of nonnegative matrix factorization with side information\label{sec:identifiability}}

Matrix factorization is not a well-identified problem: 
for one pair of factors~$(\mx{F}_r, \mx{F}_c)$, with~$\mx{V}^\ast = \mx{F}_r \mx{F}_c^T~$, 
any invertible matrix~$\mx{R}$ produces another pair of factors,~$(\mx{F}_r\mx{R}, \mx{F}_c (\mx{R}^{-1})^T)$, with~$(\mx{F}_r \mx{R}) (\mx{F}_c (\mx{R}^{-1})^T )^T = \mx{V}^\ast$. 
In order to address this identifiability problem, one has to introduce extra constraints on the factors. 

When the nonnegativity constraint is imposed on~$\mx{F}_r$ and~$\mx{F}_c$, however, it has been shown that sometimes the only invertible matrices that verify~$\mx{F}_r \mx{R} \geq 0$ and~$\mx{R}^{-1} \mx{F}_c \geq 0$ are the composition of a permutation matrix and a diagonal matrix with strictly positive diagonal elements. 
A nonnegative matrix factorization is said to be ``identified'' if the factors are unique up to permutation and scaling. 
The identifiability conditions for NMF are a hard problem, because it turns out that NMF identifiability is equivalent to conditions that are computationally difficult to check.
In this section, we review some known necessary and sufficient conditions for NMF identifiability in the literature, and develop a sufficient condition for NMF identifiability in the context of linear numerical features.

In order to simplify our theoretical analysis, we focus on the complete observation case in this section (every entry in~$\mx{V}^\ast$ is observed).
Without loss of generality, we derive the sufficient condition for row features. 
That is, we will derive conditions on~$\mx{V}^\ast\in \R_+^{n_1 \times n_2}$ and~$\mx{X}_r \in \R^{n_1\times d_1}$, so that the nonnegative matrix factorization~$\mx{V}^\ast = \mx{X}_r \mx{B}_r \mx{F}_c^T$, with~$\mx{X}_r \mx{B}_r \geq 0, \mx{F}_c  \geq 0$, is unique.
A generalization to column features can be easily obtained.
In this section, we assume that in addition to be of nonnegative rank~$k$, matrix~$\mx{V}^\ast$ is also exactly of \textit{rank}~$k$.

\subsection{Identifiability of NMF}

The authors of \cite{donoho_when_2003} and \cite{laurberg_theorems_2008} proposed two necessary and sufficient conditions for the factorization to be unique. 
Both conditions use the following geometric interpretation of NMF introduced by \cite{donoho_when_2003}.

Since~$\mx{V}^\ast = \mx{F}_r \mx{F}_c^T$,
the columns of~$\mx{V}^\ast$ are conical combinations of the columns in~$\mx{F}_r$. 
Formally,~$\cone{\mx{F}_r}$, the conical hull of the columns of~$\mx{F}_r$, is a polyhedral cone contained in the first orthant of~$\R^{n_1}$. 
As~$\mx{V}^\ast$ is of rank~$k$, the rank of~$\mx{F}_r$ is also~$k$.
This implies that the extreme rays (also called \emph{generators}) of~$\cone{\mx{F}_r}$ are exactly the columns of~$\mx{F}_r$, which are linearly independent.
$\cone{\mx{F}_r}$ is therefore 
\begin{itemize}
\item a \emph{simplicial} cone of~$k$ generators,
\item contained in~$\R_+^{n_1}$, 
\item containing all columns of~$\mx{V}^\ast$.
\end{itemize}

Inversely, if we take any cone~$\operator{F} \subseteq \R^{n_1}$ verifying these three conditions, and define a matrix~$\mx{F}$ whose columns are the~$k$ generators of~$\operator{F}$, there will be a nonnegative matrix~$\mx{G}$, so that~$\mx{V}^\ast = \mx{F} \mx{G}$.
The uniqueness of NMF is therefore equivalent to the uniqueness of simplicial cones of~$k$ generators contained in the first orthant of~$\R^{n_1}$ and containing all columns~$\mx{V}^\ast$. 

In \cite{laurberg_theorems_2008}, an equivalent geometric interpretation in~$\R^{k}$ is given in the following theorem:
\begin{theo}\cite{laurberg_theorems_2008}{\label{CNS:WH}}
A~$k$-dimensional NMF~$\mx{V}^\ast = \mx{F}_r \mx{F}_c$ of a rank-$k$ nonnegative matrix~$\mx{V}^\ast$ is unique if and only if the nonnegative orthant~$\R^k_+$ is the only simplicial cone~$\operator{A}$ with~$k$ extreme rays satisfying
\begin{align*}
\coneT{\mx{F}_r^T} \subseteq \operator{A} \subseteq \coneT{\mx{F}_c^{\star}}.
\end{align*}
\end{theo}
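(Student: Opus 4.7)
The plan is to prove the iff by setting up a bijection between invertible $k \times k$ matrices $\mx{R}$ that induce admissible alternative factorizations and simplicial $k$-extreme-ray cones $\operator{A} \subseteq \R^k$ satisfying the stated sandwich inclusion. The rank-$k$ hypothesis on $\mx{V}^\ast$ is the key leverage: it forces $\mx{F}_r$ to have full column rank $k$ and $\mx{F}_c$ to have full row rank $k$, so every alternative $k$-dimensional factorization $\mx{V}^\ast = \tilde{\mx{F}}_r \tilde{\mx{F}}_c$ must arise from a unique invertible $\mx{R} \in \R^{k\times k}$ via $\tilde{\mx{F}}_r = \mx{F}_r \mx{R}$ and $\tilde{\mx{F}}_c = \mx{R}^{-1} \mx{F}_c$. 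Uniqueness of NMF up to permutation and strictly positive scaling is then equivalent to the only admissible $\mx{R}$'s being products $\mx{P}\mx{D}$ of a permutation $\mx{P}$ and a strictly positive diagonal matrix $\mx{D}$.

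Next I would translate the two nonnegativity constraints $\mx{F}_r \mx{R} \geq 0$ and $\mx{R}^{-1} \mx{F}_c \geq 0$ into geometric inclusions. The first is equivalent to every column of $\mx{R}$ lying in the polar dual of $\cone{\mx{F}_r^T}$, and the second is equivalent to every column of $\mx{F}_c$ lying in the cone spanned by the columns of $\mx{R}$ (by the substitution $\mx{R}^{-1}\mx{f}_c = \mx{u} \geq 0 \Leftrightarrow \mx{f}_c = \mx{R}\mx{u}$). Packaging these via the standard duality for simplicial cones (the polar of a simplicial $k$-generator cone is itself simplicial with $k$ generators, with generating matrices related by $\mx{R}$ and $\mx{R}^{-T}$, and inclusions reverse under polarity), and setting $\operator{A}$ to be the polar of $\cone{\text{columns of } \mx{R}}$, the two constraints combine into the sandwich inclusion $\cone{\mx{F}_r^T} \subseteq \operator{A} \subseteq \cone{\mx{F}_c^\star}$ of the statement.

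The final step is to identify $\operator{A} = \R_+^k$ with the trivial permutation-scaling ambiguity. Since $\R_+^k$ is itself the self-dual simplicial cone spanned by the standard basis, $\operator{A} = \R_+^k$ is equivalent to $\cone{\text{columns of } \mx{R}} = \R_+^k$, which in turn holds iff the columns of $\mx{R}$ are positive multiples of a permutation of the standard basis, i.e.\ exactly $\mx{R} = \mx{P}\mx{D}$. Chaining the three steps yields both directions of the iff.

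The main obstacle I anticipate is the bookkeeping around simplicial-cone duality: converting the two primal nonnegativity constraints into the precise form of the sandwich requires fixing the right parameterization of $\operator{A}$ and carefully tracking how generators transform under polarity, and it is easy to accidentally swap the two bounding cones. A secondary subtlety is verifying that the map $\mx{R} \mapsto \operator{A}$ descends to a bijection on the natural equivalence classes — permutations and positive diagonal rescalings of the columns of $\mx{R}$ leave $\operator{A}$ invariant, mirroring the permutation/scaling ambiguity inherent to NMF — and checking that the full-rank hypothesis on $\mx{V}^\ast$ rules out any lower-dimensional degenerate $\operator{A}$, which is exactly what lets us restrict attention to simplicial cones with precisely $k$ extreme rays.
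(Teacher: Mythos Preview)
The paper does not prove this theorem: it is quoted verbatim from \cite{laurberg_theorems_2008} as a known result and no argument is supplied here, so there is nothing in the present paper to compare your proposal against. Your outline is the standard route to this statement (and is essentially how it is argued in the cited reference): use the rank-$k$ hypothesis to force any alternative factorization to be of the form $(\mx{F}_r\mx{R},\,\mx{R}^{-1}\mx{F}_c)$ for an invertible $\mx{R}$, translate the two nonnegativity constraints into cone inclusions via duality, and identify the trivial $\mx{R}=\mx{P}\mx{D}$ case with $\operator{A}=\R_+^k$. One small caution on bookkeeping: in this theorem $\mx{F}_c$ is $k\times n_2$ (the factorization is written $\mx{F}_r\mx{F}_c$, not $\mx{F}_r\mx{F}_c^T$), and the paper never defines the symbol $\mx{F}_c^{\star}$; you are implicitly (and correctly) reading it as the generator matrix of the dual cone, but when you write the details you should make that identification explicit rather than relying on the paper's undefined notation.
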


Despite the apparent simplicity of the theorem, the necessary and sufficient conditions are very difficult to check. 
Based on the theorem above, several sufficient conditions have been proposed. 
The most widely used condition is called the separability condition. 
Before introducing this condition (in its least restrictive version presented by \cite{laurberg_theorems_2008}), we need the following two definitions.

\begin{definition}[Separability]
Suppose $m \leq n$.
A nonnegative matrix~$\mx{M}\in\R_+^{m \times n}$ is said to be \emph{separable} if there is a~$m$-by-$m$ permutation matrix~$\Pi$ which verifies
\begin{align*}
\mx{M} = \Pi \begin{pmatrix}
\mx{D}_n\\
\mx{M}_{0}
\end{pmatrix},
\end{align*}
where~$\mx{D}_n$ is a~$n$-by-$n$ diagonal matrix with only strictly positive coefficients on the diagonal and zeros everywhere else, and the~$(m-n)$-by-$n$ matrix~$\mx{M}_{0}$ is a collection of the other~$m - n$ rows of~$\mx{M}$.
\end{definition}

\begin{definition}[Strongly Boundary Closeness]\label{def:facet}
A nonnegative matrix~$\mx{M}\in \R_+^{m \times n}$ is said to be \emph{strongly boundary close} if the following conditions are satisfied.
\begin{enumerate}
\item~$\mx{M}$ is \emph{boundary close}: for all~$i, j\in \{1,...,n\}, i \neq j$, there is a row~$\mx{m}$ in~$\mx{M}$ which satisfies~$m_i  = 0, m_j > 0$;
\item There is a permutation of~$\{1,..., n\}$ such that for all~$i\in \{1,..., n-1\}$, there are~$n-i$ rows~$\mx{m}^1,..., \mx{m}^{n-i}$ in~$\mx{M}$ which satisfy \begin{enumerate}
\item~$m^j_i = 0, \sum_{s = i+1}^n m^j_i > 0$ for all~$j \in \{1,...,n-i\}$; 
\item the square matrix~$(m^j_s)_{1 \leq j \leq n-i, i+1 \leq s \leq n}$ is of full rank~$(n - i)$.
\end{enumerate}
\end{enumerate}
\end{definition}

Strongly boundary closeness demands, \textit{modulo} a permutation in~$\{1,...,n\}$, that for each~$1 \leq i \leq n - 1$, there are~$n-i$ rows~$\mx{m}^1,..., \mx{m}^{n-i}$ of~$\mx{M}$ that have the following form,
\begin{equation}\label{matrix:M^i}
\begin{aligned}
 &\begin{pmatrix} 
 &\mx{m}^1 \\ 
 & \colon \\ 
 &\mx{m}^{n-i} 
 \end{pmatrix}^T = \\
 &\begin{pmatrix}
  \colon & \ddots &\colon \\
   0 & \cdots  & 0 \\
   m^1_{i+1} & \cdots & m^{n-i}_{i+1}\\
  \colon & \ddots & \colon\\
   m^1_{n} & \cdots & m^{n-i}_{n}\\
 \end{pmatrix}%
 \begin{matrix}%
 (i-1)\text{ first rows}\\
 i\text{-th row is all zero}\\
 \coolrightbrace{m^{n-i}_{i+1} \\ \colon \\ m^{n-i}_{n} } {\text{$(n-i)\times(n-i)$ full rank matrix}}\\
 \end{matrix}
\end{aligned}
\end{equation}
These row vectors,~$\mx{m}^1,..., \mx{m}^{n-i}$, all have 0 on the~$i$-th element, and its lower square matrix of is of full rank. 
There are therefore enough linearly independent points on each~$n-1$-dimensional facet~$\R_+^n$, which shows that~$\cone{\mx{M}^T}$ is somewhat maximal in~$\R_+^n$.

The following was proved in \cite{laurberg_theorems_2008}:
\begin{theo}\cite{laurberg_theorems_2008}
If~$\mx{F}_r$ is strongly boundary close, then the only simplicial cone with~$k$ generators in~$\mathbb{R}^k_+$ containing~$\text{cone}(\mx{F}_r^T)$ is~$\mathbb{R}^k_+$. 
Moreover, if~$\mx{F}_c$ is separable, then~$\mx{V}^\ast = \mx{F}_r \mx{F}_c^T$ is the unique NMF of~$\mx{V}^\ast$ up to permutation and scaling.
\end{theo}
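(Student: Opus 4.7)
The plan is to establish the two assertions of the theorem in turn and then combine them through Theorem~\ref{CNS:WH}. For the first assertion, I would fix an arbitrary simplicial cone $\mathcal{A} \subseteq \R^k_+$ with $k$ extreme rays $\mx{a}_1,\ldots,\mx{a}_k$ satisfying $\cone{\mx{F}_r^T} \subseteq \mathcal{A}$, collect the $\mx{a}_l$ as columns of an invertible matrix $\mx{A} \in \R_+^{k\times k}$, and aim to show that $\mx{A}$ is a monomial matrix (a permutation composed with a strictly positive diagonal), which is equivalent to $\mathcal{A}=\R^k_+$. The crucial elementary observation is: if a row $\mx{f}$ of $\mx{F}_r$ has $f_i=0$, then in any nonnegative decomposition $\mx{f}=\mx{A}\mx{c}$ with $\mx{c}\geq 0$ only the generators $\mx{a}_l$ satisfying $(a_l)_i=0$ can receive positive weight, since $\mx{A}\geq 0$.

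Assume WLOG that the permutation from Definition~\ref{def:facet} is the identity. For each $i\in\{1,\ldots,k-1\}$, I would use strong boundary closeness to locate the generators with vanishing $i$-th coordinate. The $k-i$ rows $\mx{m}^1,\ldots,\mx{m}^{k-i}$ supplied by the condition lie in $\mathrm{span}(\mx{a}_l: (a_l)_i=0)$ and are linearly independent because their restrictions to coordinates $i+1,\ldots,k$ form a full-rank block. Augmenting them with $i-1$ rows from the boundary closeness condition, each with $i$-th coordinate zero and a strictly positive coordinate at positions $1,\ldots,i-1$ respectively, I would argue by a block-triangular induction (using generators already identified as canonical in prior steps for smaller~$i$) that the combined $k-1$ vectors are linearly independent. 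This forces at least $k-1$ generators with $(a_l)_i=0$, and exactly $k-1$ since $\mx{A}$ is invertible; the remaining generator $\mx{a}_{\sigma(i)}$ has a strictly positive $i$-th coordinate. The resulting map $\sigma:\{1,\ldots,k-1\}\to\{1,\ldots,k\}$ is injective: if $\sigma(i)=\sigma(j)$ for $i\neq j$, any row of $\mx{F}_r$ with $i$-th coordinate zero and $j$-th coordinate positive (guaranteed by boundary closeness) cannot use $\mx{a}_{\sigma(i)}=\mx{a}_{\sigma(j)}$, contradicting positivity of the $j$-th coordinate. Hence exactly one generator $\mx{a}_{k'}$ has its first $k-1$ coordinates equal to zero, and nonnegativity together with invertibility of $\mx{A}$ forces $\mx{a}_{k'}=c_{k'}\mx{e}_k$ with $c_{k'}>0$; after relabeling, $\mx{a}_i=c_i\mx{e}_i$ for every $i$, so $\mathcal{A}=\R^k_+$.

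For the second assertion, Theorem~\ref{CNS:WH} reduces NMF uniqueness to $\R^k_+$ being the only simplicial cone with $k$ extreme rays sandwiched between $\cone{\mx{F}_r^T}$ and $\cone{\mx{F}_c^\star}$. Separability of $\mx{F}_c$ means that, up to a row permutation, $\mx{F}_c$ contains a strictly positive $k\times k$ diagonal block, so the canonical directions $\mx{e}_j$ appear (up to scaling) as rows of $\mx{F}_c$; the standard dual-cone translation then yields $\cone{\mx{F}_c^\star}\subseteq\R^k_+$. Any admissible $\mathcal{A}$ is therefore a simplicial cone with $k$ generators in $\R^k_+$ containing $\cone{\mx{F}_r^T}$, so the first assertion forces $\mathcal{A}=\R^k_+$ and uniqueness follows.

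The main obstacle lies in the inductive step for $i\geq 2$: strong boundary closeness alone only supplies $k-i$ linearly independent rows on $\{x_i=0\}$, which lower-bounds the number of generators with zero $i$-th coordinate by $k-i$ only, not $k-1$. The block-triangular augmentation by boundary closeness rows must carefully exploit the generators already identified as scalar multiples of $\mx{e}_1,\ldots,\mx{e}_{i-1}$ in earlier inductive steps to exclude the possibility that the additional rows lie in the span of the $\mx{m}^j$. A secondary subtlety is making the dual-cone translation from separability of $\mx{F}_c$ precise enough to obtain the inclusion $\cone{\mx{F}_c^\star}\subseteq\R^k_+$ used in the second assertion.
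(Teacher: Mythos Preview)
The paper does not prove this theorem; it is quoted verbatim from \cite{laurberg_theorems_2008} and used as a black box for the subsequent identifiability result with side information. So there is no in-paper proof to compare against, and your proposal has to stand on its own.

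Your overall strategy is sound, and your handling of the second assertion (separability of $\mx{F}_c$ pins the outer cone to $\R^k_+$, then Theorem~\ref{CNS:WH} finishes) is fine. The gap is exactly where you flag it, but your plan does not actually close it. The induction at step $i$ requires the hypothesis ``$\mx{a}_{\sigma(j)}\propto\mx{e}_j$ for all $j<i$'' (you say so yourself in the final paragraph), yet the inductive step you describe concludes only that $(\mx{a}_{\sigma(i)})_i>0$, which is strictly weaker and cannot feed step $i{+}1$. And even granting the stronger hypothesis, augmenting the $k-i$ strong-boundary-closeness rows with $i-1$ further \emph{rows of $\mx{F}_r$} need not produce $k-1$ linearly independent vectors: Definition~\ref{def:facet} places no constraint on coordinates $1,\dots,i-1$ of the $\mx{m}^j$, so nothing stops the extra boundary-closeness rows from lying in their span.

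The repair is to strengthen what you prove at each step so that $\mx{a}_{\sigma(i)}\propto\mx{e}_i$ is established before moving on. Once you know exactly $k-1$ generators have vanishing $i$-th coordinate, apply boundary closeness to every pair $(j,i)$ with $j\neq i$: the resulting row has zero $j$-th coordinate and positive $i$-th coordinate, so it must use $\mx{a}_{\sigma(i)}$ and can only use generators with vanishing $j$-th coordinate; hence $(\mx{a}_{\sigma(i)})_j=0$ for every $j\neq i$, i.e.\ $\mx{a}_{\sigma(i)}\propto\mx{e}_i$. With this in hand, the dimension count at step $i{+}1$ is clean if you augment the $k-i-1$ strong-boundary-closeness rows not with further rows of $\mx{F}_r$ but with the \emph{generators} $\mx{a}_{\sigma(1)},\dots,\mx{a}_{\sigma(i)}$ themselves: they all lie on $\{x_{i+1}=0\}$ and have zero projection onto coordinates $i{+}2,\dots,k$, while the $k-i-1$ rows have full-rank projection there, so the combined set spans a $(k-1)$-dimensional subspace and forces $k-1$ generators onto that facet.
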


\subsection{Identifiability with side information}

The NMF with linear row features,~$\mx{V}^\ast = \mx{X}_r \mx{B}_r \mx{F}_c^T~$, is said to be \emph{unique}, if for all matrix pairs~$(\tilde{\mx{B}}_r, \tilde{\mx{F}}_c) \in \R^{d_1\times k} \times \R^{n_2\times k}~$ that verifies 
\begin{align*}
\mx{X}_r\tilde{\mx{B}}_r \geq 0, \quad \tilde{\mx{F}}_c \geq 0,  \quad \mx{V}^\ast = \mx{X}_r\tilde{\mx{B}}_r \tilde{\mx{F}}_c,
\end{align*}
we have~$\tilde{\mx{B}} = \mx{B}_r,$~$ \tilde{\mx{F}}_c = \mx{F}_c$ up to permutation of columns and scaling.

For a given full-rank matrix~$\mx{X} \in \R^{n_1 \times d_1}$, consider the following two sets of matrices:
\begin{align*}
E = \{ \mx{M} \in \R_+^{n_1 \times k} | & \text{The columns of } \mx{M} \text{ are} \\
& \text{strongly boundary close} \};\\
F(\mx{X}) = \{ \mx{M} \in \R_+^{n_1 \times k} |  & \text{rank}(\mx{M}) = k, \text{span}(\mx{M}) \in \text{span}(\mx{X})\}.
\end{align*}

\begin{theo}
If~$E \bigcap F(\mx{X}_r) \neq \varnothing$, and~$\mx{B}_r \in (\mx{X}_r^T \mx{X}_r)^{-1}\mx{X}_r^T (E \bigcap F(\mx{X}_r))~$, and~$\mx{F}_c$ is separable, then the factorization~$\mx{V}^\ast = \mx{X}_r \mx{B}_r \mx{F}_c^T~$ is unique.
\end{theo}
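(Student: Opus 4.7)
The plan is to reduce the side-information factorization problem to the plain NMF setting so that the Laurberg strong-boundary-closeness/separability theorem recalled above applies directly. The hypothesis on $\mx{B}_r$ is tailored so that the resulting row factor $\mx{F}_r := \mx{X}_r\mx{B}_r$ lies in $E$, i.e., has strongly boundary close columns; combined with separability of $\mx{F}_c$, standard NMF uniqueness will then kick in.

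\textbf{Step 1 (unpacking the hypothesis).} The condition $\mx{B}_r \in (\mx{X}_r^T\mx{X}_r)^{-1}\mx{X}_r^T\,(E\cap F(\mx{X}_r))$ means there exists $\mx{M}\in E\cap F(\mx{X}_r)$ with $\mx{B}_r = (\mx{X}_r^T\mx{X}_r)^{-1}\mx{X}_r^T\mx{M}$. The invertibility of $\mx{X}_r^T\mx{X}_r$ in the hypothesis forces $\mx{X}_r$ to have full column rank $d_1$. Since $\mx{M}\in F(\mx{X}_r)$, the column span of $\mx{M}$ lies in the column span of $\mx{X}_r$, so $\mx{M}=\mx{X}_r\mx{C}$ for some $\mx{C}\in\R^{d_1\times k}$. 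Plugging in gives $\mx{B}_r=\mx{C}$ and thus $\mx{F}_r = \mx{X}_r\mx{B}_r = \mx{M}\in E$. In particular $\mx{F}_r\ge 0$ and its columns are strongly boundary close.

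\textbf{Step 2 (reduction to plain NMF and uniqueness).} Let $(\tilde{\mx{B}}_r,\tilde{\mx{F}}_c)$ be any other feasible pair, i.e., $\mx{X}_r\tilde{\mx{B}}_r\ge 0$, $\tilde{\mx{F}}_c\ge 0$, and $\mx{V}^\ast = \mx{X}_r\tilde{\mx{B}}_r\,\tilde{\mx{F}}_c^T$. Setting $\tilde{\mx{F}}_r := \mx{X}_r\tilde{\mx{B}}_r$, the pair $(\tilde{\mx{F}}_r,\tilde{\mx{F}}_c)$ is a nonnegative rank-$k$ factorization of $\mx{V}^\ast$. By Step 1, $\mx{F}_r$ is strongly boundary close, and by hypothesis $\mx{F}_c$ is separable. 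Applying the Laurberg theorem quoted above, the NMF $\mx{V}^\ast=\mx{F}_r\mx{F}_c^T$ is unique up to permutation and scaling. Hence there exist a permutation matrix $\mx{\Pi}$ and a positive diagonal matrix $\mx{D}$ with
\begin{align*}
\tilde{\mx{F}}_r = \mx{F}_r\,\mx{\Pi}\mx{D}, \qquad \tilde{\mx{F}}_c = \mx{F}_c\,\mx{\Pi}\mx{D}^{-1}.
\end{align*}

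\textbf{Step 3 (lifting back to $\mx{B}_r$).} From $\mx{X}_r\tilde{\mx{B}}_r = \tilde{\mx{F}}_r = \mx{X}_r\mx{B}_r\,\mx{\Pi}\mx{D}$ and the full column rank of $\mx{X}_r$, left-multiplying by $(\mx{X}_r^T\mx{X}_r)^{-1}\mx{X}_r^T$ yields $\tilde{\mx{B}}_r = \mx{B}_r\,\mx{\Pi}\mx{D}$. Together with the identity already obtained for $\tilde{\mx{F}}_c$, the factorization $\mx{V}^\ast=\mx{X}_r\mx{B}_r\mx{F}_c^T$ is unique up to the common column permutation $\mx{\Pi}$ and the scaling $\mx{D}$, which is the required notion of uniqueness.

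The argument is essentially a verification that the quoted NMF uniqueness theorem can be transferred across the linear parametrization $\mx{F}_r=\mx{X}_r\mx{B}_r$. The only real substance is Step 1: one must recognize that the hypothesis forces $\mx{X}_r\mx{B}_r$ to coincide with an element of $E$, which in turn hinges on the span condition embedded in $F(\mx{X}_r)$ (so that the orthogonal projector $\mx{X}_r(\mx{X}_r^T\mx{X}_r)^{-1}\mx{X}_r^T$ acts as the identity on $\mx{M}$). The potentially delicate point is simply to be explicit that $\mx{X}_r$ must have full column rank for the statement to be well posed, which is what makes Step 3 go through.
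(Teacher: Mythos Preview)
Your proof is correct and follows essentially the same approach as the paper's own argument: show that $\mx{X}_r\mx{B}_r$ is strongly boundary close, invoke the Laurberg uniqueness theorem for the plain NMF $(\mx{X}_r\mx{B}_r,\mx{F}_c)$, and then use the full column rank of $\mx{X}_r$ to transfer uniqueness back to $\mx{B}_r$. Your Step~1 is more explicit than the paper in spelling out why $\mx{X}_r\mx{B}_r=\mx{M}$ (namely, the span condition in $F(\mx{X}_r)$ makes the orthogonal projector $\mx{X}_r(\mx{X}_r^T\mx{X}_r)^{-1}\mx{X}_r^T$ fix $\mx{M}$), but this is exactly the content the paper compresses into the single sentence ``the nonnegative matrix $\mx{X}_r\mx{B}_r$ is strongly boundary close.''
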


\begin{proof}
Notice that for~$\mx{B}_r \in (\mx{X}_r^T \mx{X}_r)^{-1}\mx{X}_r^T (E \bigcap F(\mx{X}_r))$, the nonnegative matrix~$\mx{X}_r\mx{B}_r$ is strongly boundary close. 
The factorization~$(\mx{X}_r\mx{B}_r, \mx{F}_c)$ is therefore unique. 
The model identifiability follows immediately, since~$\mx{X}_r$ is of full rank.
\end{proof}

\subsubsection*{Example of~$\mx{X}_r$ that verifies~$E \bigcap F(\mx{X}_r) \neq \varnothing$}

For this theorem to have practical consequences, one needs to find appropriate row features so that~$E \bigcap F(\mx{X}_r) \neq \varnothing$.

Here we provide a family of matrices~$\mx{X}_r$ so that~$E \bigcap F(\mx{X}_r) \neq \varnothing$.


With a fixed~$k\geq 2$, suppose that~$\mx{X}_r$ has~$k(k - 1)/2$ columns, and at least~$k(k - 1)/2$ rows, with the first~$k(k - 1)/2 + 1$ rows defined as the following: 
\begin{itemize}
\item the first row and column have 0 on the first entry and positive entries elsewhere;
\item for~$2 \leq i \leq k$, ~$\mx{X}_r$ has strictly positive entries on the first~$((i - 1)(i - 2)/2 + 1)$ columns, from Row~$(i - 1)(i - 2)/2 + 3$ to Row~$(i - 1)(i - 2)/2 + 1 + i$, and zero entries everywhere else.
These~$(k-1)$ rows are linearly independent.
\end{itemize}

Then we have~$E \bigcap F(\mx{X}_r) \neq \varnothing$, because the following~$k(k - 1)/2$-by-$k$ matrix~$\mx{B}_r$ is in this set:
\begin{itemize}
\item for~$1 \leq i \leq k$,~$\mx{B}_K^\ast$ has~$i$ consecutive strictly positive entries on the~$i$-th column, between Row~$i(i-1)/2 + 1$ and Row~$i(i-1)/2 + i$.
\end{itemize}

The following matrices instantiate the case of~$k = 4$:
\begin{align*}
&
\mx{B}_r = \begin{pmatrix}
1 & 0 & 0 & 0 \\ 
  0 & 1 & 0 & 0 \\ 
  0 & 0 & 1 & 0 \\ 
  0 & 0 & 1 & 0 \\ 
  0 & 0 & 0 & 1 \\ 
  0 & 0 & 0 & 1 \\ 
  0 & 0 & 0 & 1
\end{pmatrix}, \\
&\mx{X}_r = \begin{pmatrix}
0 & 5 & 14 & 7 & 9 & 15 & 13 \\ 
  10 & 0 & 0 & 0 & 0 & 0 & 0 \\ 
  4 & 5 & 0 & 0 & 0 & 0 & 0 \\ 
  12 & 4 & 0 & 0 & 0 & 0 & 0 \\ 
  10 & 7 & 10 & 7 & 0 & 0 & 0 \\ 
  13 & 10 & 12 & 9 & 0 & 0 & 0 \\ 
  12 & 10 & 16 & 8 & 0 & 0 & 0 \\ 
  \colon & \colon & \colon & \colon & \colon & \colon & \colon
\end{pmatrix}, \\
&\mx{F}_r = \begin{pmatrix}
0 & 5 & 21 & 37 \\ 
  10 & 0 & 0 & 0 \\ 
  4 & 5 & 0 & 0 \\ 
  12 & 4 & 0 & 0 \\ 
  10 & 7 & 17 & 0 \\ 
  13 & 10 & 21 & 0 \\ 
  12 & 10 & 24 & 0 \\ 
  \colon & \colon & \colon & \colon
\end{pmatrix}.
\end{align*}

If~$E \bigcap F(\mx{X}) \neq \varnothing$, for any invertible matrix~$\mx{R} \in \R^{K\times K}$, ~$E \bigcap F(\mx{X}\mx{R}) \neq \varnothing$.

\section{HALSX algorithm \label{sec:algorithm}}

In this section, we propose \emph{Hierarchical Alternating Least Squares with eXogeneous variables} (HALSX), a general algorithm to estimate the nonnegative matrix factorization problem with side information, from linear measurement, by solving (\ref{eq:general_optimization_problem}).  
It is an extension to a popular NMF algorithm: Hierarchical Alternating Least Squares (HALS) (see \cite{cichocki_hierarchical_2007,kim_algorithms_2014}).

Before discussing HALSX, we will first present a result on the local convergence of Gauss-Seidel algorithms. 
This result guarantees that any legitimate limiting points generated by HALSX are critical points of (\ref{eq:general_optimization_problem}).

While presenting specific methods to estimate link functions, we will only discuss row features,
as a generalization to column features is immediate.

\subsection{Relaxation of convexity assumption for the convergence of Gauss-Seidel algorithm}

To show the local convergence of HALSX algorithm, we first extend a classical result concerning block nonlinear Gauss-Seidel algorithm \cite[Proposition 4]{grippo_convergence_2000}.

Consider the minimization problem,
\begin{equation}\label{eq:gauss_seidel problem}
\begin{aligned}
\min\quad & g(x)\\
 \text{s.t.}\quad & x \in X = X_1 \times X_2 \times ... \times X_m \subseteq \R^n,
\end{aligned}
\end{equation}
where~$g$ is a continuously differentiable real-valued function, 
and the feasible set~$X$ is the Cartesian product of closed, nonempty and convex subsets~$X_i \subset \R^{n_i}$, for~$1 \leq i \leq m$, with~$\sum_i n_i = n$.
Suppose that the global minimum is reached at a point in $X$. 
The~$m$-block Gauss-Seidel algorithm is defined as Algorithm~\ref{algo:GS}.

\begin{algorithm}
\caption{Gauss-Seidel algorithm}
\label{algo:GS}
\begin{algorithmic}
\STATE Initialize~$x^0 \in X, t = 0$\;
\WHILE{Stopping criterion is not satisfied}
\FOR{$i = 1,2,...,m$}
\STATE Calculate~$x_i^{t+1} = \arg \min_{y_i \in X_i} g(x_1^{t+1}, ..., y_i, ..., x_m^t)$\;
\ENDFOR
\STATE  Set~$x^{t + 1} = (x_1^{t+1}, ..., x_m^{t+1})$\;
\STATE ~$t = t+1$\;
\ENDWHILE
\end{algorithmic}
\end{algorithm}

Define formally the notion of component-wise quasi-convexity.
\begin{definition}
Let~$i \in \{1,2,...,m \}$. 
The function~$g$ is \textit{quasi-convex} with respect to the~$i$-th component on~$X$ if for every~$x \in X$ and~$y_i \in X_i$, 
we have
\begin{align*}
& g(x_1, x_2,... t x_i + (1 - t) y_i, ..., x_m) \\
\leq &\max\{g(x), g(x_1, x_2, ..., y_i, ... x_m) \}
\end{align*}
for all~$t \in [0, 1]$. 
$g$ is said to be strictly quasi-convex with respect to the~$i$-th component, 
if with the additional assumption that~$y_i \neq x_i$, 
we have
\begin{align*}
& g(x_1, x_2,... t x_i + (1 - t) y_i, ..., x_m)\\ 
< & \max\{g(x), g(x_1, x_2, ..., y_i, ... x_m) \}
\end{align*}
for all~$t \in ]0, 1[$.
\end{definition}

It has been shown that if~$g$ is strictly quasi-convex with respect to the first~$m - 2$ blocks of components on~$X$, 
then a limiting point produced by a Gauss-Seidel algorithm is a critical point \cite{grippo_convergence_2000}. 

This result is not directly applicable for the HALS algorithm. 
Typically, if~$\mx{f}_{c,i}$, the~$i-th$ column of~$\mx{F}_c$, is identically zero, the loss function is completely flat respect to~$\mx{f}_{c,i}$, the~$i$-th column of~$\mx{F}_r$. 
Therefore the loss function is not strictly quasi-convex. 
In order to avoid this scenario, \cite{kim_algorithms_2014} suggests thresholding at a small positive number~$\epsilon$ instead of at 0, when updating each column of the factor matrices. 

In fact the convexity assumption of \cite{grippo_convergence_2000} can be slightly relaxed to directly apply to HALS, as demonstrated by the following proposition.
\begin{theo}\label{theo:GS}
Suppose that the function~$g$ is quasi-convex with respect to~$x_i$ on~$X$, for~$i = 1, ..., m -2$. 
Suppose that some limit points~$\bar{x}$ of the sequence~$\{x^t\}_{(t \in \N)}$ verify that~$g$ is strictly quasi-convex with respect to~$x_i$ on the product set~$\{\bar{x}_1\}\times \{\bar{x}_2\} \times ... \times X_i \times, ... \times \{\bar{x}_m\}$, for~$i = 1, ..., m -2$. 
Then every such limiting point is a critical point of Problem (\ref{algo:GS}).
\end{theo}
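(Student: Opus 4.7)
The plan is to mirror the contradiction argument in the proof of Proposition~4 of~\cite{grippo_convergence_2000}, but to carefully localize every invocation of strict quasi-convexity to the particular limit point $\bar{x}$ singled out in the hypothesis. Fix such a $\bar{x}$ satisfying the stated slice-wise strict quasi-convexity condition, and assume for contradiction that $\bar{x}$ is not a critical point of~(\ref{eq:gauss_seidel problem}). Then there exist an index $i^{\ast} \in \{1,\ldots,m\}$ and a point $\bar{y}_{i^{\ast}} \in X_{i^{\ast}}$ with
\[
g(\bar{x}_1,\ldots,\bar{y}_{i^{\ast}},\ldots,\bar{x}_m) < g(\bar{x}).
\]
I would extract a subsequence $\{x^{t_k}\}$ with $x^{t_k} \to \bar{x}$, and introduce the intermediate iterates $z^{t_k,i} = (x_1^{t_k+1},\ldots,x_i^{t_k+1},x_{i+1}^{t_k},\ldots,x_m^{t_k})$ that arise inside one sweep of Algorithm~\ref{algo:GS}. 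Monotone decrease of $\{g(x^t)\}$, together with boundedness of $g$ from below on the orbit of interest, yields $g(x^t) \downarrow g(\bar{x})$ and therefore $g(z^{t_k,i}) \to g(\bar{x})$ for every $i$.

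The next step is to show, for $i = 1, 2, \ldots, i^{\ast}-1$ successively, that $x_i^{t_k+1} \to \bar{x}_i$. This is the place where the original proof uses strict quasi-convexity on the full product set, and where I would substitute a limiting argument. By compactness one can pass to a further subsequence along which $x_i^{t_k+1} \to x_i^{\star} \in X_i$. Combining optimality of $x_i^{t_k+1}$ in the block update with continuity of $g$, I obtain
\[
g(\bar{x}_1,\ldots,\bar{x}_{i-1},x_i^{\star},\bar{x}_i,\ldots,\bar{x}_m) \;=\; g(\bar{x}).
\]
If $x_i^{\star} \neq \bar{x}_i$, then strict quasi-convexity of $g$ with respect to $x_i$ on the \emph{slice} $\{\bar{x}_1\}\times\cdots\times X_i\times\cdots\times\{\bar{x}_m\}$, applied at $t = 1/2$, produces a point on the segment between $x_i^{\star}$ and $\bar{x}_i$ strictly below $g(\bar{x})$. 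However, this contradicts the fact that $\bar{x}_i$ is itself a minimizer on this slice, which is what the block-update optimality forces in the limit. Hence $x_i^{\star} = \bar{x}_i$, and iterating from $i = 1$ to $i^{\ast}-1$ gives $z^{t_k,i^{\ast}-1} \to \bar{x}$.

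Having arrived at this, I then apply the standard Gauss-Seidel line-search lemma used in~\cite{grippo_convergence_2000}: because $x_{i^{\ast}}^{t_k+1}$ minimizes $g$ along the $i^{\ast}$-th block starting from $z^{t_k,i^{\ast}-1}$, and $z^{t_k,i^{\ast}-1} \to \bar{x}$, continuity of $g$ forces
\[
\lim_{k \to \infty} g(z^{t_k,i^{\ast}}) \;\le\; g(\bar{x}_1,\ldots,\bar{y}_{i^{\ast}},\ldots,\bar{x}_m) \;<\; g(\bar{x}),
\]
which contradicts $g(z^{t_k,i^{\ast}}) \to g(\bar{x})$. For the cases $i^{\ast} \in \{m-1, m\}$, the plain quasi-convexity (no strictness) on the last two blocks is still enough, exactly as in the classical proof, because for those blocks one only needs monotonic behavior along the segment and not strict decrease. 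The main obstacle, and the step I would write out with the most care, is the induction $i = 1, \ldots, i^{\ast}-1$ above: this is precisely where the full-product-set strict quasi-convexity is used in~\cite{grippo_convergence_2000}, and the whole point of the relaxation is to show that a compactness-plus-continuity argument lets the slice-wise hypothesis at $\bar{x}$ do the job just as well.
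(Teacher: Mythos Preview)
Your strategy matches the paper's: follow the contradiction scheme of Proposition~5 in \cite{grippo_convergence_2000} and localize every appeal to strict quasi-convexity to the slice through the fixed limit point~$\bar{x}$. The paper, however, packages the inductive step (forcing $x_i^{t_k+1}\to\bar{x}_i$) as a standalone lemma and proves it with the normalized-direction trick of \cite{bertsekas_parallel_1989}: assuming $\tau_k := \|v_i^t - y_i^t\|$ stays bounded away from zero along a subsequence, one sets $s^t = (v^t - y^t)/\tau_k$, extracts a limit $\bar{s}$ on the unit sphere, and uses quasi-convexity along the segment $[y^t, y^t + \tau_k s^t]$ to conclude $g(\bar{y} + \epsilon\tau_0\bar{s}) = g(\bar{y})$ for all $\epsilon\in[0,1]$, contradicting slice-wise strict quasi-convexity at $\bar{y}$.

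Your inline argument instead extracts a convergent subsequence $x_i^{t_k+1}\to x_i^\star$ directly and then compares $x_i^\star$ with $\bar{x}_i$. This is valid \emph{once you know $\{x_i^{t_k+1}\}$ is bounded}, but that boundedness is not supplied by the hypotheses: only the end-of-sweep iterates $x^{t_k}$ are assumed to accumulate at $\bar{x}$, and nothing in the stated assumptions prevents the freshly updated block $x_i^{t_k+1}$ from being unbounded. The paper's device avoids this entirely because it only needs compactness of the unit sphere, not of the iterates themselves. If you add a compact-level-set assumption (standard in this literature and harmless for the HALS application, where the factors live in a bounded nonnegative region) your argument closes; otherwise the paper's lemma is the more robust formulation of exactly the same idea.
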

Compared to the result of \cite{grippo_convergence_2000}, this shows that the strict convexity with respect to one block does not have to hold universally for feasible regions of other blocks. 
It only needs to hold at the limiting point.

This theorem can be established following the proof of Proposition 5 of \cite{grippo_convergence_2000}, 
using the following lemma.
\begin{lemma}
Suppose that the function~$g$ is quasi-convex with respect to~$x_i$ on~$X$, for some~$i \in \{1,...,m\}$. 
Suppose that some limit points~$\bar{y}$ of~$\{y^t\}$ verify that~$g$ is strictly quasi-convex with respect to~$x_i$ on~$\{\bar{y}_1\}\times \{\bar{y}_2\} \times ... \times X_i \times, ... \times \{\bar{y}_m\}$. Let~$\{v^t\}$ be a sequence of vectors defined as follows:
\begin{align*}
v_j^t = \left\{
\begin{array}{ll}
y_j^t \quad & \text{ if } j \neq i,\\
\arg \min_{z_i \in X_i} g(y_1^{t}, ..., z_i, ..., y_m^t)\quad & \text{ if } j = i.
\end{array}
\right.
\end{align*}
Then, if ~$\lim_{t \rightarrow +\infty} g(y^t) - g(v^t) = 0$, we have~$\lim_{t \rightarrow +\infty} || v_i^t - y_i^t || = 0$. That is~$\lim_{t \rightarrow +\infty} || v^t - y^t || = 0$.
\end{lemma}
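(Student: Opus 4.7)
The plan is to argue by contradiction. Suppose $\|v_i^t - y_i^t\|$ does not tend to $0$. Then there exist $\epsilon > 0$ and a subsequence (which I still index by $t$) along which $\|v_i^t - y_i^t\| \geq \epsilon$. Passing to a further subsequence, I may assume in addition that $y^t$ converges to the specified limit point $\bar{y}$ on which $g$ is strictly quasi-convex in the $i$-th component. The main obstacle is that $X_i$ is only assumed closed and convex, not bounded, so $\{v_i^t\}$ itself may fail to have a convergent subsequence, and the strict quasi-convexity at $\bar{y}$ cannot be applied to $v_i^t$ directly.

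To bypass this, I would introduce a normalized intermediate point. Set $\sigma := \epsilon/2$ and $\lambda_t := \sigma / \|v_i^t - y_i^t\| \in (0, 1/2]$, and define $w^t$ by $w_j^t := y_j^t$ for $j \neq i$ and $w_i^t := (1-\lambda_t)\, y_i^t + \lambda_t\, v_i^t$. Convexity of $X_i$ gives $w_i^t \in X_i$, and by construction $\|w_i^t - y_i^t\| = \sigma$, so $\{w_i^t\}$ is bounded. Extracting a sub-subsequence, $w_i^t \to \bar{w}_i$ with $\|\bar{w}_i - \bar{y}_i\| = \sigma > 0$. Quasi-convexity of $g$ in the $i$-th component yields $g(w^t) \leq \max\{g(y^t), g(v^t)\} = g(y^t)$, while the definition of $v^t$ as a minimizer gives $g(w^t) \geq g(v^t)$. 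Combining these inequalities with the hypothesis $g(y^t) - g(v^t) \to 0$ and the continuity of $g$, I obtain $g(\bar{y}_1, \ldots, \bar{w}_i, \ldots, \bar{y}_m) = g(\bar{y})$.

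The last step uses strict quasi-convexity at $\bar{y}$ to derive a contradiction. Let $\bar{u}_i := \tfrac{1}{2}(\bar{y}_i + \bar{w}_i)$. Since $\bar{w}_i \neq \bar{y}_i$ and the two endpoints of the segment $[\bar{y}_i, \bar{w}_i]$ both give the common value $g(\bar{y})$, the strict quasi-convexity assumption at $\bar{y}$ implies $g(\bar{y}_1, \ldots, \bar{u}_i, \ldots, \bar{y}_m) < g(\bar{y})$. Now define $u^t$ by $u_j^t := y_j^t$ for $j \neq i$ and $u_i^t := (1 - \lambda_t/2)\, y_i^t + (\lambda_t/2)\, v_i^t \in X_i$; note $u_i^t = \tfrac{1}{2}(y_i^t + w_i^t) \to \bar{u}_i$. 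Because $u_i^t$ lies in $X_i$, the minimality of $v_i^t$ gives $g(u^t) \geq g(v^t)$, and letting $t \to \infty$ (using $g(v^t) \to g(\bar{y})$) produces $g(\bar{y}_1, \ldots, \bar{u}_i, \ldots, \bar{y}_m) \geq g(\bar{y})$, contradicting the strict inequality just derived. This contradiction forces $\|v_i^t - y_i^t\| \to 0$, and since $v^t$ and $y^t$ coincide on all other blocks, $\|v^t - y^t\| \to 0$ as well.
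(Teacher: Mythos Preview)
Your argument is correct and follows essentially the same route as the paper's proof (which in turn adapts Bertsekas): assume the gap does not vanish, normalize the displacement $v_i^t - y_i^t$ to produce a bounded intermediate point, squeeze its value between $g(v^t)$ and $g(y^t)$ using minimality and quasi-convexity, pass to the limit, and contradict strict quasi-convexity at $\bar{y}$. The only structural difference is that the paper applies the squeeze simultaneously for every scaling $\epsilon\in[0,1]$ of the normalized direction, obtaining $g(\bar{y}+\epsilon\tau_0\bar{s})=g(\bar{y})$ on the whole segment at once, which makes the contradiction with strict quasi-convexity immediate; you instead establish equality only at the single endpoint $\bar{w}_i$ and therefore need the extra auxiliary sequence $u^t$ to reach the contradiction, but this is a cosmetic rather than substantive difference.
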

\begin{proof} 
(The proof of the lemma is based on \cite{bertsekas_parallel_1989}.)

Suppose on the contrary that~$|| v_i^t - y_i^t ||$ does not converge to 0. 
Define~$\tau_k = || v_i^t - y_i^t ||$. 
Restricting to a subsequence, we can obtain that~$\tau_k \geq \tau_0 > 0.$ 
Define~$s^t = \frac{v^t - y^t}{\tau_k}$. 
Notice that~$\{s^t\}$ is of unit norm, and~$v^t = y^t + \tau_k s^t$. Since~$\{s^t\}$ is on the unit sphere, 
it has a converging subsequence. 
By restricting to a subsequence again, we could suppose that~$\{s^t\}$ converges to~$\bar{s}$.

For all~$\epsilon \in [0, 1]$, we have~$ 0 \leq \epsilon \tau_0 \leq \tau_k$, which implies~$y^t + \epsilon \tau_0 s^t  \in X$ is on the segment~$[y^t, v^t]$. 
This segment has strictly positive dimension in the subspace corresponding to~$X_i$.

By the definition of~$\{v^t \}$,~$g(v^t) \leq g(y_1^{t}, ..., z_i, ..., y_m^t)$, for all~$t$, and for all~$z_i \in X_i$. In particular,
\begin{align*}
g(v^t) \leq g( y^t + \epsilon \tau_0 s^t).
\end{align*}

By quasi-convexity of~$g$ on~$X$, 
\begin{align*}
g(y^t + \epsilon \tau_0 s^t) \leq \max\{g(y^t), g(v^t)\} = g(y^t).
\end{align*}

Taking the limit when~$t$ converges to~$+\infty$ on both equalities, we obtain
\begin{align*}
& g(\bar{y}) = \lim_{t \rightarrow +\infty} g(v^t) \leq  \lim_{t \rightarrow +\infty} g(y^t + \epsilon \tau_0 s^t)\\
=  & g(\bar{y} + \epsilon \tau_0 \bar{s}) \leq \lim_{t \rightarrow +\infty} g(y^t) = g(\bar{y}).
\end{align*}
In other words,~$g(\bar{y} + \epsilon \tau_0 \bar{s}) = g(\bar{y})$,~$\forall \epsilon \in [0, 1]$, 
which contradicts the strict quasi-convexity of~$g$ on~$\{\bar{y}_1\}\times \{\bar{y}_2\} \times ... \times X_i \times, ... \times \{\bar{y}_m\}$.
\end{proof}

\subsection{HALSX algorithm}

To solve (\ref{eq:general_optimization_problem}), we propose HALSX (Algorithm \ref{algo:halsx}).
When complete observations are available, the feature matrices are identity matrices, and when only linear functions are allowed as link functions, Algorithm \ref{algo:halsx} is equivalent to HALS \cite{cichocki_hierarchical_2007}.

\begin{algorithm}
\caption{Hierarchical Alternating Least Squares with eXogeneous variables for NMF (HALSX)}
\label{algo:halsx}
\begin{algorithmic}
\REQUIRE Measurement operator~$\operator{A}$, measurements~$\mx{b}$, features~$\mx{X}_r$ and~$\mx{X}_c$, functional spaces~$F_r$ and~$F_c$ in which to search the link functions, and~$1\leq k \leq \min\{n_1, n_2\}$.\;
\STATE Initialize~$\mx{F}_r^0, \mx{F}_c^0 \geq 0,  t = 0$\;
\WHILE{Stopping criterion is not satisfied}
\STATE ~$\mx{V}^{t} = \arg \min_{\mx{V} | \operator{A}(\mx{V}) = \mx{b},  \mx{V}\geq 0}\|\mx{V} - \mx{F}_r^t (\mx{F}_c^t)^T \|_F^2$\;
\STATE~$\mx{R}^t = \mx{V}^t - \mx{F}_r^t (\mx{F}_c^t)^T$
\FOR{$i = 1,2,...,k$}
\STATE~$\mx{R}_i^t = \mx{R}^t + \mx{f}_{r,i}^t (\mx{f}_{c,i}^t)^T~$
\STATE Calculate~$f_{r,i}^{t+1} = \arg \min_{f \in F_r} \| \mx{R}_i^t - f(\mx{X}_r) (\mx{f}_{c,i}^t)^T \|_F^2$\;
\STATE~$\mx{f}_{r,i}^{t+1} = \max(0, f_{r,i}^{t+1}(\mx{X}_r))$
\STATE~$\mx{R}^t = \mx{R}_i^t - \mx{f}_{r,i}^{t+1} (\mx{f}_{c,i}^t)^T~$
\ENDFOR
\FOR{$i = 1,2,...,k$}
\STATE~$\mx{R}_i^t = \mx{R}^t + \mx{f}_{r,i}^{t+1} (\mx{f}_{c,i}^t)^T~$
\STATE Calculate~$f_{c,i}^{t+1} = \arg \min_{f \in F_c} \| \mx{R}_i^t - \mx{f}_{r,i}^{t+1} f(\mx{X}_c)^T \|_F^2$\;
\STATE~$\mx{f}_{c,i}^{t+1} = \max(0, f_{c,i}^{t+1}(\mx{X}_c))$
\STATE~$\mx{R}^{t} = \mx{R}_i^t - \mx{f}_{r,i}^{t+1} (\mx{f}_{c,i}^{t+1})^T~$
\ENDFOR
\STATE ~$t = t+1$\;
\ENDWHILE
\RETURN~$\mx{V}^t =  \arg \min_{\mx{V} | \operator{A}(\mx{V}) = \mx{b},  \mx{V}\geq 0}\|\mx{V} - \mx{F}_r^t (\mx{F}_c^t)^T \|_F^2$, \\
~$\mx{F}_r^t \in \mathbb{R}_+^{n_1 \times k}, f_{r,1}^t,..., f_{r,k}^t \in F_r$,\\
~$\mx{F}_c^t \in \mathbb{R}_+^{n_2\times k}, f_{c,1}^t,..., f_{c,k}^t \in F_c.$
\end{algorithmic}
\end{algorithm}

From Theorem \ref{theo:GS}, one deduces that every full-rank limiting point produced by the popular HALS algorithm is a critical point.

In this algorithm, at each elementary update step, we first look for a link function which minimizes the quadratic error, without concerning ourselves with its nonnegativity. 
The obtained evaluation of the minimizer function is then thresholded at~$\mx{0}$ to update the factors.

To show the convergence of HALSX algorithm, we need to assure that for some functional spaces~$F_r$ and~$F_c$, such an update solves a corresponding subproblem of (\ref{eq:general_optimization_problem}). 
To do this, we will use the following proposition:
\begin{proposition}\label{prop:jacobian}
Suppose that~$\mx{R} \in \R^{n_1 \times n_2}$,~$\mx{f}_c \in \R_+^{n_2}$ are not identically equal to zero, and $g : \R^d \rightarrow \R^{n_1}$, with~$d \geq n_1$, is a convex differentiable function. Suppose
\begin{align*}
\mx{\theta}^\ast \in \arg \min_{\mx{\theta} \in \R^d} \| \mx{R} - g(\mx{\theta}) (\mx{f}_c)^T \|_F^2.
\end{align*}
If~$\nabla g_{\mx{\theta}^\ast}$, the Jacobian matrix of~$g$ at~$\mx{\theta}^\ast$, is of rank~$n_1$,
then $\mx{\theta}^\ast$
is also a solution to 
\begin{align}\label{eq:convex_positivity}
\min_{\mx{\theta} \in \R^d} \| \mx{R} - (g(\mx{\theta}))_+ (\mx{f}_c)^T \|_F^2.
\end{align}
\end{proposition}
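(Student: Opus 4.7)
My plan is to reduce the question to a projection onto the nonnegative orthant $\R_+^{n_1}$ after a completion-of-squares. Since the quadratic $w \mapsto \|\mx{R} - w\mx{f}_c^T\|_F^2$ on $\R^{n_1}$ is strongly convex (because $\mx{f}_c$ is not identically zero), it admits a unique unconstrained minimizer $\mx{u}^\ast := \mx{R}\mx{f}_c / \|\mx{f}_c\|^2$, and a direct expansion yields the identity $\|\mx{R} - w\mx{f}_c^T\|_F^2 = \|\mx{R}\|_F^2 - \|\mx{f}_c\|^2 \|\mx{u}^\ast\|^2 + \|\mx{f}_c\|^2 \|w - \mx{u}^\ast\|^2$. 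Applied with $w = g(\mx{\theta})$ this turns the original objective into (up to an additive constant) $\|\mx{f}_c\|^2 \|g(\mx{\theta}) - \mx{u}^\ast\|^2$, and applied with $w = (g(\mx{\theta}))_+$ it turns the thresholded objective (\ref{eq:convex_positivity}) into (up to the same constant) $\|\mx{f}_c\|^2 \|(g(\mx{\theta}))_+ - \mx{u}^\ast\|^2$. Both objectives are therefore squared Euclidean distances to the common target $\mx{u}^\ast$.

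Next I would identify $g(\mx{\theta}^\ast)$ with $\mx{u}^\ast$. Because $\mx{\theta}^\ast$ is an unconstrained minimizer of the differentiable composite $F(\mx{\theta}) := \|\mx{R} - g(\mx{\theta})\mx{f}_c^T\|_F^2$, the gradient vanishes at $\mx{\theta}^\ast$, and the chain rule gives $\nabla F(\mx{\theta}^\ast) = 2\|\mx{f}_c\|^2 (\nabla g_{\mx{\theta}^\ast})^T (g(\mx{\theta}^\ast) - \mx{u}^\ast) = 0$. Since by hypothesis $\nabla g_{\mx{\theta}^\ast}$ has rank $n_1$, its transpose is injective on $\R^{n_1}$, so one may cancel it and conclude $g(\mx{\theta}^\ast) = \mx{u}^\ast$. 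This is the one step where the Jacobian assumption does all the work, and I expect it to be the main obstacle in a formal write-up: one must be careful that the nondegeneracy of $\mx{R}$ and $\mx{f}_c$ ensures $\|\mx{f}_c\|^2 > 0$, and that the rank condition makes the cancellation legitimate without any further convexity argument.

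Finally, I would close by invoking the projection property of the nonnegative orthant. The Euclidean projection of any $\mx{v} \in \R^{n_1}$ onto the closed convex cone $\R_+^{n_1}$ is the coordinate-wise positive part $\mx{v}_+$, so $\|\mx{w} - \mx{v}\|^2 \geq \|\mx{v}_+ - \mx{v}\|^2$ for every $\mx{w} \in \R_+^{n_1}$. Applying this with $\mx{v} = \mx{u}^\ast = g(\mx{\theta}^\ast)$ and $\mx{w} = (g(\mx{\theta}))_+ \in \R_+^{n_1}$, and observing that equality is attained at $\mx{\theta} = \mx{\theta}^\ast$ because $(g(\mx{\theta}^\ast))_+ = (\mx{u}^\ast)_+$ is by construction the projection of $\mx{u}^\ast$, one gets $\|(g(\mx{\theta}))_+ - \mx{u}^\ast\|^2 \geq \|(g(\mx{\theta}^\ast))_+ - \mx{u}^\ast\|^2$ for all $\mx{\theta} \in \R^d$. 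Combined with the reduction from the first paragraph, this is exactly the statement that $\mx{\theta}^\ast$ is a minimizer of (\ref{eq:convex_positivity}), finishing the proof.
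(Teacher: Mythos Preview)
Your proof is correct and shares its two pivotal steps with the paper's argument: both use the vanishing gradient together with the full-rank Jacobian to identify $g(\mx{\theta}^\ast)$ with the unconstrained least-squares minimizer $\mx{u}^\ast = \mx{R}\mx{f}_c/\|\mx{f}_c\|^2$, and both then appeal to the fact that $(\mx{u}^\ast)_+$ is the Euclidean projection of $\mx{u}^\ast$ onto $\R_+^{n_1}$. Where the two diverge is in the wrap-up. The paper frames the thresholded objective as $L\circ g$ with $L(\mx{f}) = \|\mx{R} - (\mx{f})_+\mx{f}_c^T\|_F^2$, asserts that $L$ and $L\circ g$ are convex, and concludes by showing $\mx{0}$ lies in the subgradient at $\mx{\theta}^\ast$. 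Your completion-of-squares reduces both objectives to squared Euclidean distances to the common target $\mx{u}^\ast$ and then uses the projection inequality directly, bypassing any convexity claim on $L\circ g$. This is not merely cosmetic: the function $L$ is in fact \emph{not} convex whenever some coordinate of $\mx{R}\mx{f}_c$ is strictly positive (e.g.\ in one dimension, $t\mapsto(1-(t)_+)^2$ has left derivative $0$ and right derivative $-2$ at $t=0$), so your route avoids a genuinely delicate point in the paper's argument while remaining more elementary. As a side remark, neither proof actually uses the hypothesis that $g$ is convex; only differentiability and the rank condition on the Jacobian enter.
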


\begin{proof}
Take~$\mx{R} \in \R^{n_1 \times n_2}$,~$\mx{f}_c \in \R_+^{n_2}$ not identically equal to zero. 
We will note by~$L$ the loss function, so that~$L(\mx{f}) = \| \mx{R} - (\mx{f})_+ (\mx{f}_c)^T \|_F^2$ for all~$\mx{f} \in \R^{n_1}$.
The function~$L$ is convex.

Problem (\ref{eq:convex_positivity}), which can be rewritten as 
\begin{align*}
\min_{\mx{\theta} \in \R^d} L(g(\mx{\theta})),
\end{align*}
is also convex.
The subgradient of the composition function~$L \circ g$ at~$\mx{\theta} \in \R^d$ is simply obtained by multiplying~$\nabla g_{\mx{\theta}}$, the Jacobian matrix of~$g$ at~$\mx{\theta}$, to each element of~$\partial L_{g(\mx{\theta})}$, or~$\partial L_{g(\mx{\theta})} \equiv \nabla g_{\mx{\theta}} \partial L_{g(\mx{\theta})} = \{\nabla g_{\mx{\theta}}\mx{y} | \mx{y} \in  \partial L_{g(\mx{\theta})}\}$.
Therefore~$\forall \mx{\theta} \in \R^{d}$,~$\mx{\theta}$ is a minimizer of (\ref{eq:convex_positivity}), if and only if~$\mx{0} \in \nabla g_{\mx{\theta}}\partial L_{g(\mx{\theta})}$.

Since
\begin{align*}
\mx{\theta}^\ast \in \arg \min_{\mx{\theta} \in \R^d} \| \mx{R} - g(\mx{\theta}) (\mx{f}_c)^T \|_F^2,
\end{align*}
is a minimizer of a smooth convex problem,
\begin{align*}
\frac{\partial}{\partial \mx{\theta}} \| \mx{R} - g(\mx{\theta}) (\mx{f}_c)^T \|_F^2 (\mx{\theta}^\ast) = \nabla g_{\mx{\theta}^\ast} (\mx{R} - g(\mx{\theta}^\ast) (\mx{f}_c)^T) \mx{f}_c = \mx{0}.
\end{align*}
This means~$(\mx{R} - g(\mx{\theta}^\ast) (\mx{f}_c)^T) \mx{f}_c = \mx{0}$, because~$\nabla g_{\mx{\theta}^\ast}$ is of full rank.
Consequently 
\begin{align*}
g(\mx{\theta}^\ast) = \frac{1}{\|\mx{f}_c\|_2^2} \mx{R} \mx{f}_c.
\end{align*}
It has been shown in NMF literature (for example \cite[Theorem 2]{kim_algorithms_2014}) that,
\begin{align*}
(g(\mx{\theta}^\ast))_+ = \arg\min_{\mx{f} \in \R_+^{n_1}} \| \mx{R} - \mx{f} (\mx{f}_c)^T \|_F^2.
\end{align*}
This is equivalent to
\begin{align*}
g(\mx{\theta}^\ast) \in \arg \min_{\mx{f} \in \R^{n_1}} L(\mx{f}),
\end{align*}
or 
\begin{align*}
\mx{0} \in \partial L_{g(\mx{\theta}^\ast)}.
\end{align*}
We therefore conclude with~$\mx{0} \in \nabla g_{\mx{\theta}^\ast}\partial L_{g(\mx{\theta}^\ast)}$.
\end{proof}

In many regression methods, even when a non-linear transformation is applied to the data, the regression function is linear in its parameters.
A non-exhaustive list of methods include linear regression~$(g(\mx{\theta}) = \mx{X}_r \mx{\theta})$, spline regression~$(g(\mx{\theta}) = \phi(\mx{X}_r) \mx{\theta})$, or support vector regression (SVR)~$(g(\mx{\theta}) = K(\mx{X}_r, \mx{X}_r) \mx{\theta})$.
In this case,~$g$ has a constant Jacobian matrix.
In the case of linear and spline regression, the Jacobian matrix is of rank $n_1$ if there are no less features than examples. 
For SVR, this is true for any positive definite kernels.
This allows us to apply the previous lemma to each column update step of Algorithm \ref{algo:halsx}.
By calculating~$f_{r,i}^{t+1} = \arg \min_{f \in F_r} \| \mx{R}^t - f(\mx{X}_r) (\mx{f}_{c,i}^t)^T \|_F^2$ at Step~$t$ for Column~$i$ in~$\mx{F}_r$, we actually have
\begin{align*}
f_{r,i}^{t+1} = \arg \min_{f \in F_r} \| \mx{R}^t - (f(\mx{X}_r))_+ (\mx{f}_{c,i}^t)^T \|_F^2.
\end{align*}
This shows that at each iteration, we solve the subproblems of (\ref{eq:general_optimization_problem}).

In these cases, by rewriting the functional space~$F_r$ and~$F_c$ in a parametric form, the search space is actually~$\R^{r_1}$ and~$\R^{r_2}$, for some~$r_1$ and~$r_2$.
\begin{proposition}
If~$n_1 \leq r_1$,~$n_2 \leq r_2$, and the regression functions are linear in parameters with a full-rank Jacobian matrix,
every full-rank limiting point produced by HALSX (Algorithm \ref{algo:halsx}) is a critical point of Problem (\ref{eq:general_optimization_problem}).
\end{proposition}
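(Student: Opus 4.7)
The plan is to reduce the convergence analysis of HALSX to the Gauss-Seidel theorem (Theorem \ref{theo:GS}) by showing that, under the stated hypotheses, HALSX is indistinguishable from classical HALS applied to a reformulated problem over $\R_+^{n_1}$-valued and $\R_+^{n_2}$-valued factor columns. This mirrors the remark already made after Theorem \ref{theo:GS} for HALS, extended to accommodate the exogenous variables through the linear parametrization of the link functions.

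First I would apply Proposition \ref{prop:jacobian} to each column subproblem of Algorithm \ref{algo:halsx}. Since each regression function is linear in parameters, we may write the row link function as $g(\theta) = \Phi \theta$ for some $n_1 \times r_1$ matrix $\Phi$ whose rank equals $n_1$ under the full-rank Jacobian and $n_1 \leq r_1$ assumptions (the column side is analogous). Proposition \ref{prop:jacobian} states that the unconstrained minimizer $\theta^\ast = \arg\min_\theta \|\mx{R}_i^t - \Phi\theta (\mx{f}_{c,i}^t)^T\|_F^2$ also minimizes the ramp-composed loss $\theta \mapsto \|\mx{R}_i^t - (\Phi\theta)_+(\mx{f}_{c,i}^t)^T\|_F^2$, and its image under $\theta \mapsto (\Phi\theta)_+$ is the closed-form HALS update $\mx{f}_{r,i}^{t+1} = (\mx{R}_i^t \mx{f}_{c,i}^t/\|\mx{f}_{c,i}^t\|_2^2)_+$. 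Because $\Phi$ is surjective onto $\R^{n_1}$, the map $\theta \mapsto (\Phi\theta)_+$ is surjective onto $\R_+^{n_1}$, so minimizing over the parameterized family and minimizing directly over $\R_+^{n_1}$ yield the same factor-column update. Hence the sequence $\{(\mx{V}^t, \mx{F}_r^t, \mx{F}_c^t)\}$ produced by HALSX coincides with the one produced by classical HALS on the reformulated problem
\begin{equation*}
\min_{\mx{V}, \mx{F}_r, \mx{F}_c}\ \|\mx{V} - \mx{F}_r \mx{F}_c^T\|_F^2 \quad\text{s.t.}\quad \operator{A}(\mx{V})=\mx{b},\ \mx{V}\geq 0,\ \mx{F}_r \geq 0,\ \mx{F}_c\geq 0.
\end{equation*}

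I would then invoke Theorem \ref{theo:GS} on this reformulation, viewed as a $(2k+1)$-block Gauss-Seidel scheme with blocks $\mx{V}, \mx{f}_{r,1}, \ldots, \mx{f}_{r,k}, \mx{f}_{c,1}, \ldots, \mx{f}_{c,k}$ over closed convex feasible sets. The objective is quadratic and therefore convex in each block, which furnishes the global quasi-convexity hypothesis; moreover, at a full-rank limit point every $\mx{f}_{r,i}^\ast$ and $\mx{f}_{c,i}^\ast$ is non-zero, so the loss restricted to each column slice is strictly convex, and the $\mx{V}$-slice is always strictly convex. Strict quasi-convexity at the limit then holds for all $2k+1$ blocks, in particular for the first $2k-1$ required by the theorem, and Theorem \ref{theo:GS} yields that the full-rank limit point is critical for the reformulated problem. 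The subtle step, which I regard as the main obstacle, is transferring criticality back to the original Problem (\ref{eq:general_optimization_problem}): because its objective is nonsmooth through the ramp composition, one must verify that any Clarke subgradient in parameter space is obtained by applying the Jacobian $\Phi$ to a subgradient in the factor-column variable, and that the $0$-inclusion characterizing block-coordinate optimality is preserved by this pre-multiplication since $\Phi$ has full rank $n_1$. This is exactly the content of the argument in Proposition \ref{prop:jacobian}, so the lift is available and the conclusion follows.
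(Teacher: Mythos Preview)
Your proposal is correct and matches the paper's argument, which is not written out formally but is assembled from the discussion immediately preceding the proposition: Proposition \ref{prop:jacobian} guarantees that each HALSX column update solves the corresponding subproblem of (\ref{eq:general_optimization_problem}), and Theorem \ref{theo:GS} then applies at any full-rank limit point since the quadratic objective is quasi-convex in each block and strictly so when the complementary factor column is non-zero. Your detour through the reformulated factor-column problem followed by a transfer of criticality via the full-rank Jacobian is equivalent to, and slightly more explicit than, the paper's implicit direct application of Theorem \ref{theo:GS} in parameter space (the sentence ``the search space is actually $\R^{r_1}$ and $\R^{r_2}$'').
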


\begin{rem}
In order to extend this convergence result for Problem (\ref{eq:general_optimization_problem_naive}), one would need to ensure the obtained functions have non-negative values on the features. 
This could be done by alternating projection.
\end{rem}

\subsection{Designs and HALSX}

At each iteration of Algorithm \ref{algo:halsx}, we need to project the working matrix~$\mx{F}_r^t (\mx{F}_c^t)^T$ into the convex polytope defined by the measurements and nonnegativity: 
\begin{align} \label{eq:projection}
\mx{V}^{t} = \arg \min_{\mx{V} | \operator{A}(\mx{V}) = \mx{b},  \mx{V}\geq 0}\|\mx{V} - \mx{F}_r^t (\mx{F}_c^t)^T \|_F^2.
\end{align}

In general, the polytope projection can be obtained by alternating projection.
Namely, we can alternate between:
\begin{itemize}
\item~$\mx{V} = \mx{V} + \operator{A}^\dagger (\mx{b} - \operator{A}(\mx{V}))$;
\item~$ v_{i,j}  = \max(0, v_{i,j})$,
\end{itemize}
where~$\operator{A}^\dagger$ is the right pseudo-inverse of~$\operator{A}$, viewed as an~$N$-by-$n_1 n_2$ matrix.

For some measurement operators, there are efficient ways to solve (\ref{eq:projection}).
\begin{itemize}
\item Matrix completion mask: \\$ v_{i,j}  = \left\{ 
\begin{matrix}
& \alpha_l, \quad \text{if } \exists 1 \leq l \leq N, \mx{A}_l = \mx{e}_i \mx{e}_j^T;\\
& \max(0, v_{i,j}), \quad \text{if not.}
\end{matrix}
 \right.$
\item Temporal aggregate mask: simplex projection (see \cite{mei_nonnegative_2017} for details). 
\end{itemize}

\subsection{Linear HALSX}

In this section, we consider HALSX with numeric row features and linear row link functions.
That is, given~$\mx{X}_r$ and~$\mx{\alpha} = \operator{A}(\mx{V}^\ast)$, we need to solve
\begin{equation}\label{eq:linear_row_features}
\begin{aligned}
\min_{\mx{V}\in \R^{n_1 \times n_2},\mx{B}_r \in \R^{d_1 \times k}, \mx{F}_c \in \R^{n_2 \times k}}\quad & \|\mx{V} - (\mx{X}_r \mx{B}_r)_+ (\mx{F}_c)_+^T\|_F^2\\
 \text{s.t.}\quad & \operator{A}(\mx{V}) = \mx{b}, \quad \mx{V} \geq \mx{0},
\end{aligned}
\end{equation}

Following Algorithm \ref{algo:halsx}, we need to update the columns of~$\mx{B}_r$ at each iteration.
At the~$t$-th step, for~$1 \leq i \leq k$, we solve the subproblem
\begin{align*}
\arg \min_{\mx{b}_{r,i}} \| \mx{R}^t_i - \mx{X}_r \mx{b}_{r,i} (\mx{f}_{c,i}^t)^T \|_F^2,
\end{align*}
where~$\mx{R}^t_i = \mx{V}^t - \sum_{j = 1, j \neq i}^k \mx{X}_r\mx{b}_{r,l} \mx{f}_{c,l}^T.$
This minimization problem has a closed-form solution:
\begin{align*}
\mx{b}_{r,i}^{t+1} = \frac{1}{\| \mx{f}_{c,i}^t \|_2^2} (\mx{X}_r^T\mx{X}_r)^{-1}\mx{X}_r^T\mx{R}_i^t\mx{f}_{c,i}^t.
\end{align*}

In order to accelerate the numerical algorithm, a QR decomposition of~$\mx{X}_r = \mx{Q}\mx{R}$ is done before the iterations, where~$\mx{Q}$ is an orthogonal matrix, and~$\mx{R}$ is a square upper triangular matrix. 
When~$\mx{X}_r$ is of full rank,~$\mx{X}_r^T\mx{X}_r$ is invertible. 
We compute one time~$(\mx{X}_r^T\mx{X}_r)^{-1}\mx{X}_r = \mx{R}^{-1}\mx{Q}^T$ before the iterations, and use the result at each iteration.

\subsubsection*{Stopping criterion}

As in classical NMF algorithms, we will use the Karush–Kuhn–Tucker conditions (KKT) to provide a stopping criterion. 
The KKT conditions of (\ref{eq:linear_row_features}) are,
\begin{align*}
& \mx{V} \geq \mx{0}, \quad \mx{V} - (\mx{X}_r \mx{B}_r)_+ (\mx{F}_c)_+^T \geq \mx{0}, \quad \operator{A}(\mx{V}) = \mx{b}, \\
& \mx{V} - (\mx{X}_r \mx{B}_r)_+ (\mx{F}_c)_+^T \circ \mx{V} = \mx{0},\\
&  \nabla_{\mx{F}_c} \|\mx{V} - (\mx{X}_r \mx{B}_r)_+ (\mx{F}_c)_+^T\|_F^2 \ni  \mx{0}, \\
& \nabla_{\mx{B}_r} \|\mx{V} - (\mx{X}_r \mx{B}_r)_+ (\mx{F}_c)_+^T\|_F^2 \ni \mx{0},
\end{align*}
where~$\mx{A} \circ \mx{B}$ is the entry-wise product (Hadamard product) for~$\mx{A}, \mx{B}$ of the same dimension, and $\nabla_x f(x_0)$ is the subgradient of the function $f$ at point $x_0$, with respect to the variable $x$.
Note that $\mx{V} \geq \mx{0}$ and $\operator{A}(\mx{V}) = \mx{b}$ are always satisfied at the end of an iteration.

As 
$(\mx{X}_r \mx{B}_r)_+^T (\mx{V} - (\mx{X}_r \mx{B}_r)_+ (\mx{F}_c)_+^T) \circ \mx{\mathds{1}}_{(\mx{F}_c)> \mx{0}}$ 
and 
$\mx{X}_r^T ((\mx{V} - (\mx{X}_r \mx{B}_r)_+ (\mx{F}_c)_+^T) \circ \mx{\mathds{1}}_{\mx{X}_r \mx{B}_r> \mx{0}})$ 
are respectively in the subgradient with respect to $\mx{F}_c$ and $\mx{B}_r$, 
we will stop the algorithm when the norm of following vector
\begin{align*}
[ 
& vect((\mx{V} - (\mx{X}_r \mx{B}_r)_+ (\mx{F}_c)_+^T)_-)^T, \\
& vect(\mx{V} - (\mx{X}_r \mx{B}_r)_+ (\mx{F}_c)_+^T \circ \mx{V})^T, \\
& vect((\mx{X}_r \mx{B}_r)_+^T (\mx{V} - (\mx{X}_r \mx{B}_r)_+ (\mx{F}_c)_+^T) \circ \mx{\mathds{1}}_{(\mx{F}_c)> \mx{0}})^T, \\
& vect(\mx{X}_r^T ((\mx{V} - (\mx{X}_r \mx{B}_r)_+ (\mx{F}_c)_+^T) \circ \mx{\mathds{1}}_{\mx{X}_r \mx{B}_r> \mx{0}}))^T
],
\end{align*}
is smaller than $\epsilon$ times its initial value, with a small $\epsilon$.
For the algorithms presented in the next sections, this stopping criterion is generalized quite easily.

\subsection{HALSX with smoothing splines}

The computation considered above can estimate an NMF with linear features fairly efficiently.
However, in real applications, linear link functions are too restrictive.
In the following, we will estimate non-linear link functions that are Generalized Additive Models (GAM, \cite{wood_generalized_2006}).

A Generalized Additive Model is a generalization to Generalized Linear Model (GLM) which includes additive non-linear components. 
Consider $n$ observations $\mx{x}_i, y_i$, for $1 \leq i \leq n$, where $\mx{x}_i$ is the vector of features, and $y_i$ is an observation of a random variable $Y_i$.
Suppose that $Y_i = \mu_i + \epsilon_i$, where $\epsilon_i$ are independent identically distributed zero-mean Gaussian variables, and $\mu_i = \mathbf{E}(Y_i)$ has the following relationship to the features:
\begin{align*}
g(\mu_i) = \mx{x}_i^T \mx{\theta} + h_1(x_{i,1}) + h_2(x_{i,2}) + h_3(x_{i,3}, x_{i,4}) + ...
\end{align*}
where~$\mx{\theta}$ is the vector of parametric model components,~$g$ is a known, monotonic, twice-differentiable function,~$h_1, h_2$,~$h_3$,~..., are the non-linear functions to be estimated.

We note by $\mx{X}$ the matrix grouping the features of all observations.
We use penalized regression spline to fit the GAMs. 
For~$j = 1,2,3,...$, define a spline basis~$ \mx{a}^j = (a_1^j, a_2^j,...)$ in which~$h_j$, the~$j$-th component of the GAM, is to be estimated. 
Practically, we search for~$h_j$ is the~$L$-dimensional vector space
\begin{align*}
H(\mx{a}^j, L_j) = \{ \sum_{l = 1}^{L_j} \beta^j_l a^j_l | \mx{\beta}^j =  (\beta^j_1,... \beta^j_{L_j}) \in \R^{L_j} \}.
\end{align*}
Noting by~$\mx{X}^j = \{ a_l^j(\mx{x}_i) \}_{i,l}$ the design matrix, for~$h_j  = \sum_{l = 1}^{L_j} \beta^j_l a^j_l \in H(\mx{a}^j, L_j)$, an element of the functional space, we have
\begin{align*}
h_j(\mx{X}) = \mx{X}^j \mx{\beta}^j.
\end{align*}
The whole model of~$g$, can then be represented linearly:
\begin{align*}
g(\mx{\mu}) &= \mx{X} \theta + (\mx{X}^1, \mx{X}^2, ...) 
\begin{pmatrix}
\mx{\beta}^1\\ \mx{\beta}^2 \\ \colon
\end{pmatrix}\\
&= \mx{X} \mx{\theta} + \sum_{j} \mx{X}^j \mx{\beta}^j.
\end{align*}

The dimension of~$H(\mx{a}^j, L_j)$,~$L_j$, controls the the smoothness of the functions to be estimated. 
As little information is available on the degree of smoothness of the functions, we use a rather large~$L_j$, and add a penalty on the wiggliness,~$\int (h_j'')^2 d x$, as in \cite{wood_generalized_2006}.
The least squares estimator of this model is therefore
\begin{align*}
\arg\min_{\mx{\theta}, \mx{\beta}^1, \mx{\beta}^2, ... } \| g(\mx{\mu}) - \mx{X} \mx{\theta} - \sum_{j} \mx{X}^j \mx{\beta}^j \|^2 + \sum_{j} \lambda^j (\mx{\beta}^j)^T \mx{S}^j \mx{\beta}^j,
\end{align*}
where~$\lambda_j$ is the penalization parameter of the~$j$-th non-linear component, and~$\mx{S}^j$ is a positive definite matrix depending on~$\mx{X}$ and~$\mx{a}^j$.
The penalization parameter,~$\lambda^j$, is chosen by a generalized cross validation criterion.

\subsubsection*{HALSX-GAM}

At each iteration of the algorithm, for~$i = 1,...,k$, we re-estimate the link function~$f_{r,i}$ of the~$i$-th column of~$\mx{F}_r$ as a GAM.

The subproblem for~$i$ is the following
\begin{equation}
  \begin{aligned}
   \arg \min_{\mx{\theta}, \mx{\beta}^1, \mx{\beta}^2, ... } & \| \mx{R}_i^t - (\mx{X}_r \mx{\theta} + \sum_{j = 1} \mx{X}^j \mx{\beta}^j) (\mx{f}_{c, i}^t)^T \|_F^2 + \\
   & \sum_{j} \lambda^j (\mx{\beta}^j)^T \mx{S}^j \mx{\beta}^j.
  \end{aligned}
\end{equation}
With fixed penalization parameters~$\lambda_1, \lambda_2,$ ..., the optimization above can be solved by
\begin{align*} 
&
\begin{pmatrix}
\mx{\theta}\\
\mx{\beta^1}\\
\mx{\beta^2}\\
\colon
\end{pmatrix}^{t + 1}
 = 
 \frac{1}{\|f_{c,i}^t\|^2} \times \\
&
\begin{pmatrix}
& \mx{X}_r^T\mx{X}_r & \mx{X}_r^T\mx{X}^1 & \mx{X}_r^T\mx{X}^2 & \cdots \\
& (\mx{X}^1)^T\mx{X}_r & (\mx{X}^1)^T\mx{X}^1 + \frac{\lambda^1}{\|f_{c,i}^t\|^2} \mx{S}^j &  (\mx{X}^1)^T\mx{X}^2 & \cdots  \\
& \colon & \colon & \ddots & \cdots
\end{pmatrix}^{-1} \times \\
&
\begin{pmatrix}
\mx{X}_r^T\\
(\mx{X}^1)^T\\
(\mx{X}^2)^T\\
\colon
\end{pmatrix}
\mx{R}_i^t\mx{f}_{c,i}^t.
\end{align*}

In practice, we use the GAM estimation routines implemented in the \textit{R} package \textit{mgcv} \cite{wood_generalized_2006} to choose the penalization parameter and estimate the model at the same time.

\subsection{HALSX with other regression models}

We can replicate the strategy above to work with other regression models. 
As mentioned before, the convergence to critical point is guaranteed, as long as the regression model estimation can be re-parameterized to verify the conditions of Proposition \ref{prop:jacobian}.
Using this strategy, many off-the-shelf algorithms for regression model training can be plugged in.
In the experiments described in the next section, we use the predictive model API provided in the \textit{R} package \textit{caret} \cite{kuhn_building_2008}.

\subsubsection*{Meta-parameters in regression models}
As in HALSX-GAM, we build the estimation of meta-parameters using cross validation as a part of the link function estimation step, can treat them indifferently as regular parameters.

\subsection{An HALS-like algorithm for (\ref{eq:general_optimization_problem2})}\label{sec:HALSX2}

Before detailing Algorithm \ref{algo:halsx2} which aims to solve (\ref{eq:general_optimization_problem2}), we will first develop the elemental HALS iteration in the context of (\ref{eq:general_optimization_problem2}) where no supplemental information is supplied for the factorization model, namely~$\mx{X}_r = \mx{I}_{n_1}, \mx{X}_c = \mx{I}_{n_2}$. 
Indeed, when updating one column of~$\mx{F}_r$, the sub-problem becomes:
how to solve~$\arg \min_{\mx{f}} \| \mx{b} - \operator{A}(\mx{f}(\mx{f}_{c})^T )\|_2^2$?

We will use the fact that for all~$\mx{M} \in \R^{n_1 \times n_2}$,
\begin{align*}
\operator{A}(\mx{M}) = (\langle \mx{A}_i, \mx{M} \rangle)_{1 \leq i \leq N},
\end{align*}
and for all~$\mx{b} \in \R^N$,~$\operator{A}^\ast$, the transpose of~$\operator{A}$ is defined by
\begin{align*}
 \operator{A}^\ast(\mx{b}) = \sum_{i = 1}^N b_i \mx{A}_i.
\end{align*} 
Since 
\begin{align*}
\frac{\partial}{\partial \mx{f}} \| \mx{b} - \operator{A}( \mx{f} (\mx{f}_{c})^T )\|_2^2 
& = \operator{A}^\ast [\operator{A}(\mx{f} (\mx{f}_{c})^T) - \mx{b}] \mx{f}_c,
\end{align*}
the first order optimality condition~$\frac{\partial}{\partial \mx{f}} \| \mx{b} - \operator{A}( \mx{f} (\mx{f}_{c})^T )\|_2^2  0$ is therefore equivalent to 
\begin{align*}
\operator{A}^\ast [\operator{A}(\mx{f} (\mx{f}_{c})^T)] \mx{f}_c = \operator{A}^\ast [\mx{b}] \mx{f}_c.
\end{align*}
The left-hand side of the equation can be written as 
\begin{align*}
\operator{A}^\ast [\operator{A}(\mx{f} (\mx{f}_{c})^T)] \mx{f}_c 
& = (\sum_{i = 1}^N \langle \mx{A}_i, \mx{f} (\mx{f}_{c})^T) \rangle \mx{A}_i ) \mx{f}_c \\
& = \sum_{i = 1}^N \text{Tr} ( \mx{f}  (\mx{A}_i \mx{f}_{c})^T)  (\mx{A}_i  \mx{f}_c) \\
& = \sum_{i = 1}^N  (\mx{A}_i  \mx{f}_c) \text{Tr} (  (\mx{A}_i \mx{f}_{c})^T \mx{f} ) \\
& = \sum_{i = 1}^N  (\mx{A}_i  \mx{f}_c) (\mx{A}_i \mx{f}_{c})^T   \mx{f},
\end{align*}
which leads to the following symmetric~$n_1$-by-$n_1$ system on~$\mx{f}$:
\begin{align*}
(\sum_{i = 1}^N  (\mx{A}_i  \mx{f}_c) (\mx{A}_i \mx{f}_{c})^T)   \mx{f} = \sum_{i = 1}^N b_i \mx{A}_i \mx{f}_c,
\end{align*}
or 
\begin{align*}
\mx{f} = (\sum_{i = 1}^N  (\mx{A}_i  \mx{f}_c) (\mx{A}_i \mx{f}_{c})^T)^{-1} \sum_{i = 1}^N b_i \mx{A}_i \mx{f}_c
\end{align*}

This computation generalizes to linear exogenous variables, with the optimality condition:
\begin{align*}
\mx{\beta} = (\sum_{i = 1}^N  (\mx{X}\mx{A}_i  \mx{f}_c) (\mx{X}\mx{A}_i \mx{f}_{c})^T)^{-1} \sum_{i = 1}^N b_i \mx{A}_i \mx{f}_c.
\end{align*}

When the matrices to be inversed in the these equations are not invertible, we will use the generalized inverse instead.

Using these elementary steps, we propose Algorithm \ref{algo:halsx2} to solve Problem (\ref{eq:general_optimization_problem2}).
Compared to Algorithm \ref{algo:halsx}, Algorithm~\ref{algo:halsx2}
\begin{itemize}
\item has one less block (the slack variable~$\mx{V}$ is not present);
\item checks the deviation with data more frequently;
\item each subproblem is more costly because of the presence of~$\operator{A}$ in the subproblem. As we will see in the detailed development of the computation, when~$N$, the sample size (the dimension of image of~$\operator{A}$) is large, each update involves rather costly computations.
\end{itemize}

\begin{algorithm}
\caption{Hierarchical Alternating Least Squares with eXogeneous variables for NMF (HALSX2)}
\label{algo:halsx2}
\begin{algorithmic}
\REQUIRE Measurement operator~$\operator{A}$, measurements~$\mx{b}$, rank~$1\leq k \leq \min\{n_1, n_2\}$, features~$\mx{X}_r$ and~$\mx{X}_c$, functional spaces~$F_r$ and~$F_c$ in which to search the link functions.\;
\STATE Initialize~$\mx{F}_r^0, \mx{F}_c^0 \geq 0,  t = 0$\;
\WHILE{Stopping criterion is not satisfied}
\STATE~$\mx{R}^t = \mx{b} - \operator{A}(\mx{F}_r^t (\mx{F}_c^t)^T)$
\FOR{$i = 1,2,...,k$}
\STATE~$\mx{R}^t = \mx{R}^t + \operator{A}(\mx{f}_{r,i}^t (\mx{f}_{c,i}^t)^T)~$
\STATE~$f_{r,i}^{t+1} = \arg \min_{f \in F_r} \| \mx{R}^t - \operator{A}(f(\mx{X}_r) (\mx{f}_{c,i}^t)^T )\|_2^2$\;
\STATE~$\mx{f}_{r,i}^{t+1} = \max(0, f_{r,i}^{t+1}(\mx{X}_r))$
\STATE~$\mx{R}^t = \mx{R}^t - \operator{A}(\mx{f}_{r,i}^{t+1} (\mx{f}_{c,i}^t)^T)~$
\ENDFOR
\FOR{$i = 1,2,...,k$}
\STATE~$\mx{R}^t = \mx{R}^t + \operator{A}(\mx{f}_{r,i}^{t+1} (\mx{f}_{c,i}^t)^T)~$
\STATE~$f_{c,i}^{t+1} = \arg \min_{f \in F_c} \| \mx{R}^t - \operator{A}(\mx{f}_{r,i}^{t+1} f(\mx{X}_c)^T) \|_2^2$\;
\STATE~$\mx{f}_{c,i}^{t+1} = \max(0, f_{c,i}^{t+1}(\mx{X}_c))$
\STATE~$\mx{R}^t = \mx{R}^t - \operator{A}(\mx{f}_{r,i}^{t+1} (\mx{f}_{c,i}^{t+1})^T)~$
\ENDFOR
\STATE ~$t = t+1$\;
\ENDWHILE
\RETURN~$\mx{F}_r^t \in \mathbb{R}_+^{n_1 \times k}, f_{r,1}^t,..., f_{r,k}^t \in F_r$,\\
~$\mx{F}_c^t \in \mathbb{R}_+^{n_2\times k}, f_{c,1}^t,..., f_{c,k}^t \in F_c.$
\end{algorithmic}
\end{algorithm}

\subsubsection*{Complexity}
At each sub-iteration Algorithm \ref{algo:halsx2}, we need to calculate~$N$~$n_1$-by-$n_1$ or~$n_2$-by-$n_2$ matrices ($(\mx{A}_i  \mx{f}_c) (\mx{A}_i \mx{f}_{c})^T$), then inverse the sum of the these~$N$ matrices.
While the computation of the sum is map-reducible, on a single-threaded machine, this can be computationally expensive when~$N$ is large. 
Each iteration of Algorithm \ref{algo:halsx2} has a multiplicative complexity of~$O(kN (n_1^2 + n_2^2))$, while each iteration of Algorithm \ref{algo:halsx} has a complexity of~$O(k (n_1^2 + n_2^2) + N n_1 n_2)$ with general linear measurement operator.
This difference in complexity can be very important when~$N$ or~$k$ is large.

\section{Experiments\label{sec:experiments}}

We use one synthetic dataset and three real datasets to evaluate the proposed methods. 
\begin{itemize}
\item \textbf{Synthetic data}: a rank-20 150-by-180 nonnegative matrix simulated following the generative model (Section~\ref{sec:general model}), with~$\mx{X}_r\in \R^{150\times3}$ and~$\mx{X}_c \in \R^{180\times4}$ matrices with independent Gaussian entries,~$f_r: \R^3 \rightarrow \R^ {20}$ ($f_c : \R^4 \rightarrow \R^ {20}$) is a function formed with a dimension-33 (44 for~$f_c$) spline basis with random weights, truncated at 0 ($T = 150, N = 180$). 
\item \textbf{French electricity consumption} (proprietary dataset): daily consumption of 473 medium-voltage feeders gathering each around 1,500 consumers based near Lyon in France from 2010 to 2012. 
The first two years are used as training data ($T = 1096, N = 473$).
\item \textbf{Portuguese electricity consumption} \cite{UCI_ML} daily consumption of 370 Portuguese clients from 2010 to 2014 ($T = 1461, N = 369$).
\item \textbf{MovieLens 100k} \cite{_movielens_2015} an anonymized public dataset with 100,000 movie scores for 1682 movies from 943 users ($T = 943, N = 1682$).
Note that the data matrix is not complete.
Error rates are calculated on the vector of available scores.
\end{itemize}

The following matrix recovery/completion methods are compared:
\begin{itemize}
\item \textbf{interpolation} For temporal aggregates only: to recover the target matrix, temporal aggregates are distributed equally over the covered periods.
\item \textbf{HALS, NeNMF, softImpute} Matrix recovery/completion methods without side information.
\end{itemize}
The following regression methods are compared:
\begin{itemize}
\item \textbf{individual\_gam} Estimating separate GAMs on each individual or period, on the matrix obtained from \textbf{interpolation} or on the whole data matrix when it is available.
\item \textbf{factor\_gam} Estimating GAMs on the factors obtained by \textbf{HALS} or \textbf{NeNMF}.
\item \textbf{rrr} \cite{addy_reduced-rank_????} Applying reduced-rank regression on the matrix obtained from \textbf{interpolation} or on the whole data matrix when it is available.
\item \textbf{grmf} \cite{rao_collaborative_2015} A matrix completion algorithm using graph-based side information to enhance collaborative filtering performance.
\item \textbf{trmf} \cite{yu_high-dimensional_2015} A matrix completion algorithm tailored to time series, by adding three penalization terms to the matrix factorization quadratic error. When only temporal aggregate measurements are available, we apply this method on the matrix obtained from \textbf{interpolation}.
\item \textbf{HALSX\_model} Algorithm \ref{algo:halsx}.
\end{itemize}

For all matrix factorization methods (HALS, NeNMF, rrr, factor\_gam, grmf, trmf, HALSX\_model), we use the method with several ranks, then choose the best rank 
($k \in \{2,3,...,20\}$ for synthetic and Portuguese data,~$k \in \{2,3,...,10\}$ for French and MovieLens data).
For trmf, we do a grid search on the three penalization parameters, and choose the best combination.
For HALSX\_model, we use four different regression models: linear model, GAM, support vector regression with linear kernel, and Gaussian process regression with radial basis kernel (\textbf{lm}, \textbf{gam}, \textbf{svmLinear}, \textbf{gaussprRadial}).
We use the implementation of \textbf{lm} in standard \textit{R}, and \textbf{svmLinear} and \textbf{gaussprRadial} of the \textit{R} package \textit{kernlab}\cite{karatzoglou_kernlab-s4_2004}, through the \textit{caret} API. 
As of \textbf{gam}, we use the \textit{mgcv} implementation directly.

For each data matrix, we apply a linear measurement operator on an upper-left submatrix to obtain the measures (of dimension 100-by-130 for synthetic data, 730-by-270 for French data, 731-by-369 for Portuguese data, and 666-by-1189 for MovieLens data). 
On these measurements of the submatrix, we use each method to estimate a matrix factorization model, with or without side information.
We then report the matrix recovery error on this sampled submatrix for all methods, and the prediction errors on the rest of the data matrix for the methods can produce predictions for new columns and/or rows. 
We distinguish row prediction error, column prediction error, and row-column prediction error, as the error calculated on the lower-left, upper-right and lower-right submatrix.

We report the relative root-mean-squared error as error metric in this section: 
\begin{itemize}
\item for electricity datasets, $\text{RRMSE}(\mx{V}, \mx{V}^\ast) = \frac{\| \mx{V} - \mx{V}^\ast\|_F}{\|\mx{V}^\ast\|_F}~$,
\item for MovieLens, we calculate the $l_2$-norm version on the vector of all available movie scores.
\end{itemize}
Other error metrics do not seem to be qualitatively different in our experiments.

\subsection{Execution time and precision between HALS and HALS2}

To show-case the complexity difference discussed in Section~\ref{sec:HALSX2}, we run HALS (Algorithm~\ref{algo:halsx}) and HALS2 (Algorithm~\ref{algo:halsx2}) on a random temporal aggregate mask, with no side information.
In HALS, we treated the mask as a general mask, without using the acceleration specific to temporal aggregate masks.
The per iteration execution time is shown in Figure \ref{fig:time per iteration hals}. 
The difference in complexity discussed in Section \ref{sec:HALSX2} is fairly clear here. 
In HALS2, the execution time per iteration increases both with the rank and the number of samples in the data, at least for sampling rate from 14.3\% on. 
For low sampling rates, HALS2 often diverges.

As discussed in Section \ref{sec:HALSX2}, although the execution time per iteration is greater, HALS2 could be more efficient if it does much less iterations than HALS.
In Figure \ref{fig:time hals}, we can see that this is not the case: 
for problems with high sampling rates, HALS2 indeed does less iterations to converge. 
However, the total execution time is still larger than HALS.
For lower sampling rates, HALS2 has troubles converging, and only stops when the maximal execution time allowed (300 seconds) is reached.
This is also confirmed in Figure \ref{fig:reconstitution hals}, where HALS2 has much worse recovery error than HALS.

\begin{figure}
\begin{center}
\includegraphics[width=0.5\textwidth]{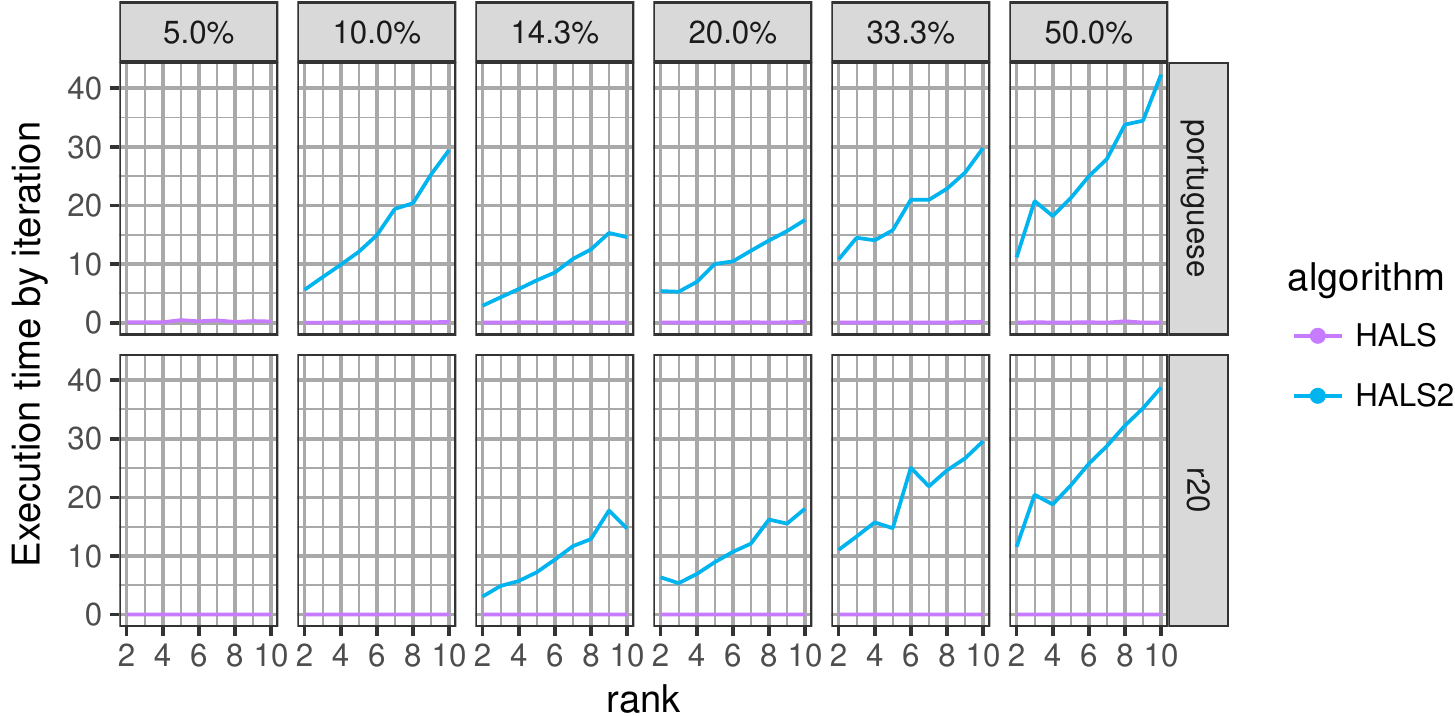}
\caption{Execution time per iteration of Algorithm \ref{algo:halsx} and Algorithm \ref{algo:halsx2} without exogenous variables}
\label{fig:time per iteration hals}
\end{center}
\end{figure}

\begin{figure}
\begin{center}
\includegraphics[width=0.5\textwidth]{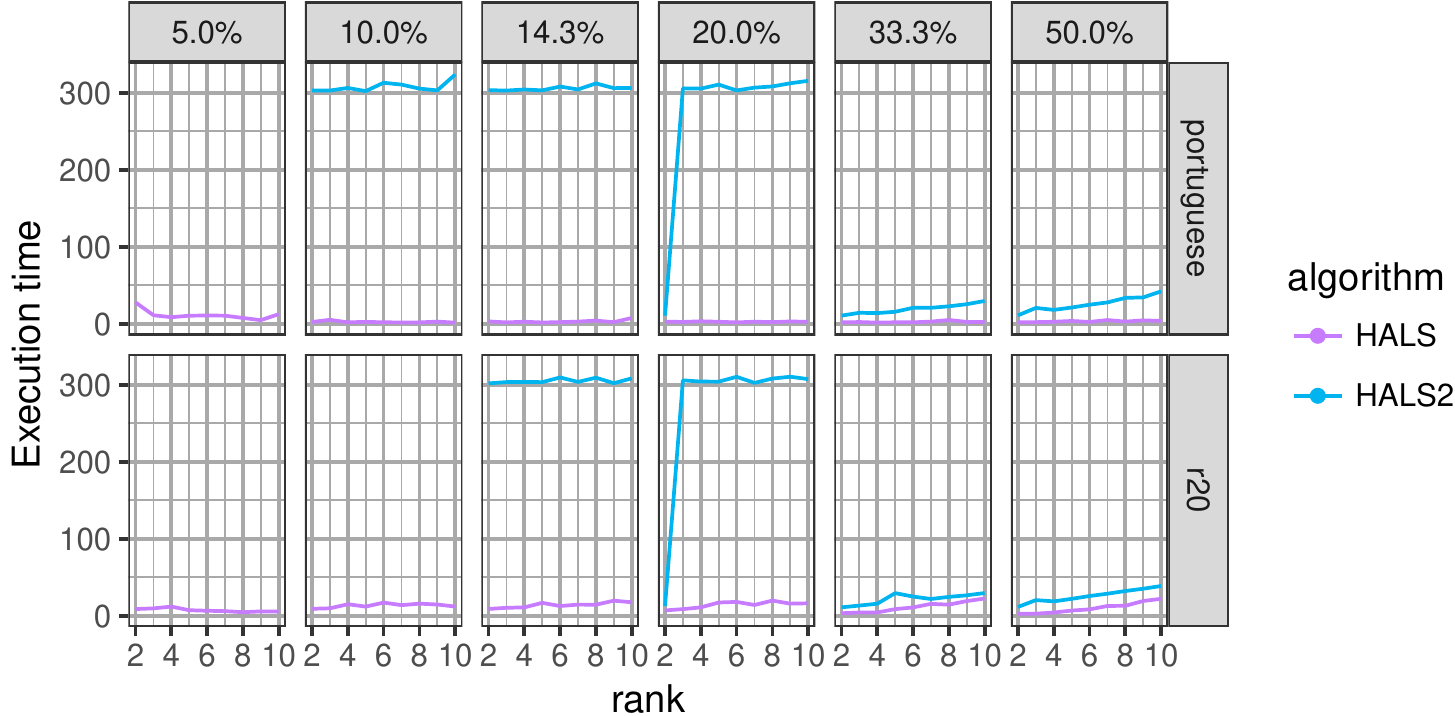}
\caption{Execution time of Algorithm \ref{algo:halsx} and Algorithm \ref{algo:halsx2} without exogenous variables}
\label{fig:time hals}
\end{center}
\end{figure}
\begin{figure}
\begin{center}
\includegraphics[width=0.5\textwidth]{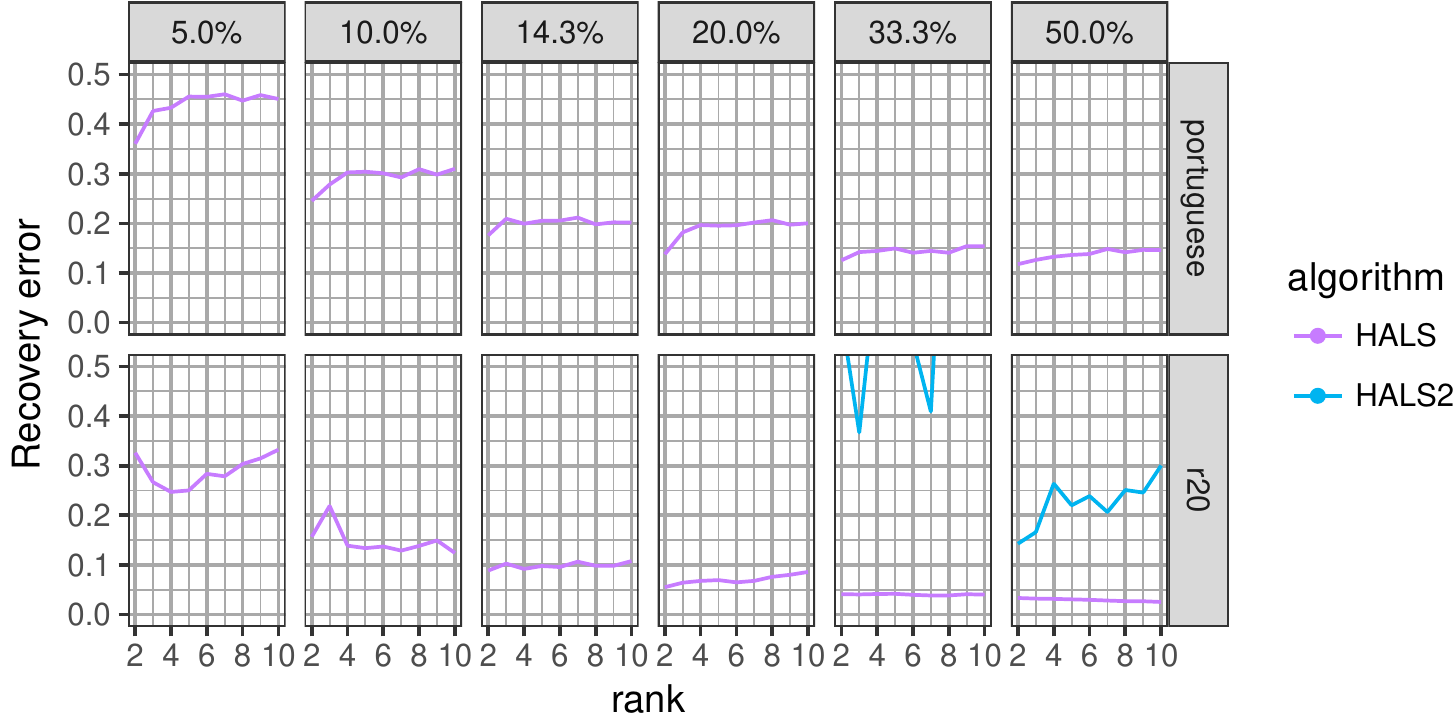}
\caption{Reconstitution precision of Algorithm \ref{algo:halsx} and Algorithm \ref{algo:halsx2} without exogenous variables}
\label{fig:reconstitution hals}
\end{center}
\end{figure}

\subsection{Performance on temporal aggregate measurements}

On the synthetic and the two real electricity consumption datasets, we use Algorithm \ref{algo:halsx} to perform matrix recovery and prediction on temporal aggregate measurements.
Every method except for \textbf{grmf} is used in this setting.

We use two types of temporal aggregate measurements: periodic and random.
In periodic measurements, each scalar measure covers a fixed number of periods of one individual.
In random measurements, the number of periods covered by a measure is random (see \cite{mei_nonnegative_2017} for more details).
In electricity consumption data, periodic measurements are closer to the actual meter reading schedules of utility companies, while matrix recovery with random measurements is an easier problem.
For both sampling types, we sample 10\%, 20\%,~..., 50\% of the data to show-case the matrix recovery performance of the proposed method.

For the synthetic data, we use the true row and column features used to produce the simulations.
For the French electricity data, the row features are variables known to have an influence on electricity consumption: the temperature, the day type (weekday, weekend, or holiday), the position of the year.
The column features are the percentage of residential, professional, or industrial usages in the group of users for each column.
For the Portuguese electricity data, as no individual features are available, we only use the same row features as for the French dataset (temperature, day type, position of the year).

Figure \ref{fig:time series recovery} shows the matrix recovery error.
For most of the scenario, HALSX\_models (red lines with symbols) are comparable or better than the other methods without side information.
The only case where an HALSX\_model is a little worse is when compared with HALS and NeNMF (two NMF methods) in synthetic data with random measurements, which is the least close to the real application.
The \textbf{softImpute} \cite{mazumder_spectral_2010} method is not well adapted to temporal aggregate measurements, and has much higher error (higher than the maximal value in these graphics.

\begin{figure}
\begin{center}
\includegraphics[width=0.5\textwidth]{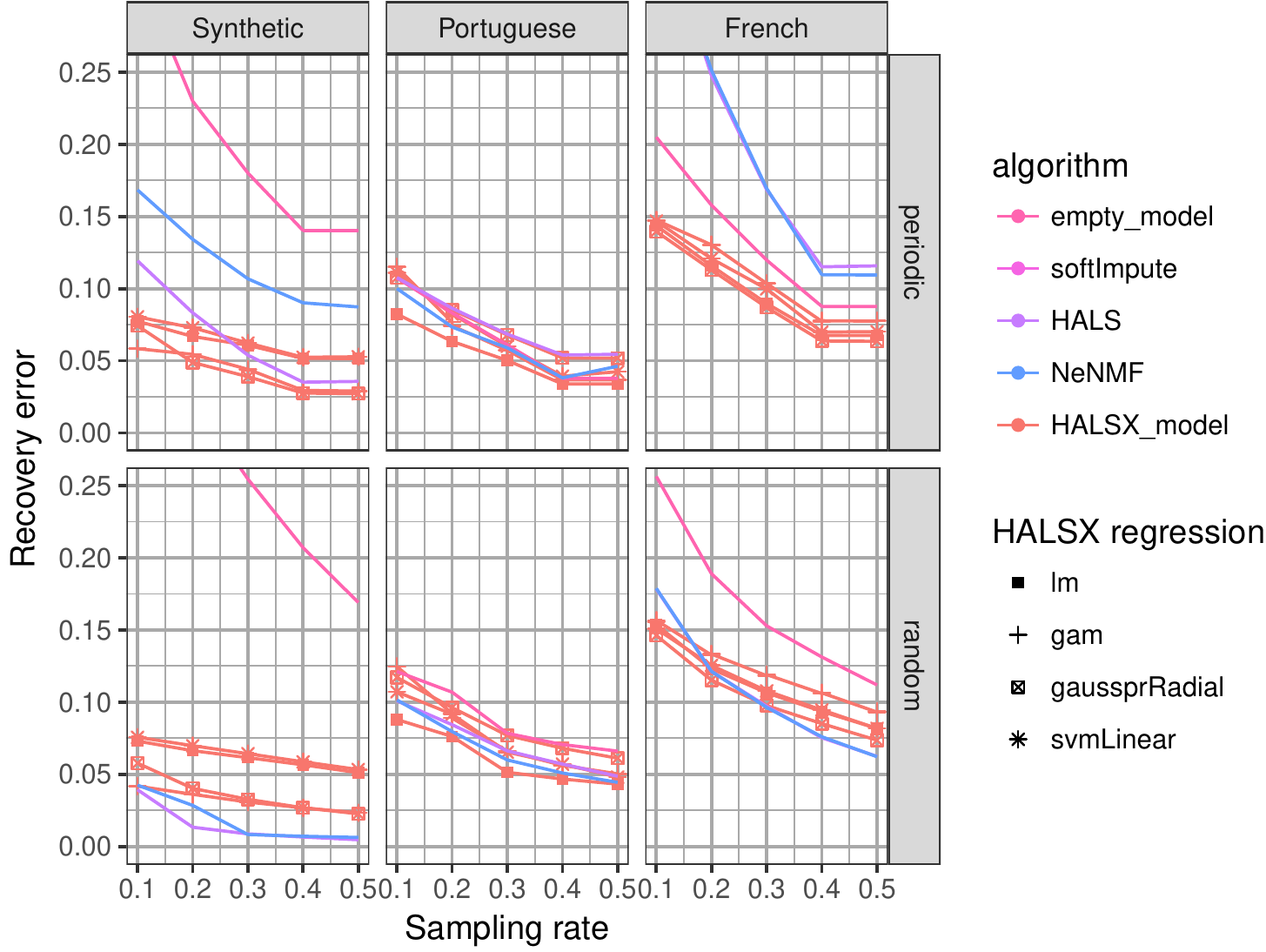}
\caption{Recovery performance on synthetic and real electricity data}
\label{fig:time series recovery}
\end{center}
\end{figure}

Figures \ref{fig:synthetic}, \ref{fig:french}, and \ref{fig:portuguese} show the prediction error on the three datasets.
We can see that \textbf{trmf} \cite{yu_high-dimensional_2015} and \textbf{rrr} \cite{addy_reduced-rank_????} not very adapted to temporal aggregate measurements.
When they are applicable (\textbf{trmf} is only applicable to row prediction, \textbf{rrr} only applicable to row or column predictions, not RC predictions), they have much worse performance than the other methods, except in Synthetic data with complete observations.

In most cases, HALSX\_models are comparable to or better than factor\_gam and individual\_gam, which shows that using side information while estimating the factorization model produces factors more adapted for prediction.
It is interesting to note that in some cases, using HALSX\_models with incomplete data (sampling rate less than 100\%) is actually better than using individual\_gam with complete data.
This means that compared to traditional regression methods, the proposed method achieves better prediction models using much less data, by exploiting the low-rank structure of the problem.

Moreover, the performance HALSX\_models is the least sensitive to sampling rates: it is mostly constant from 30\% of data. 
This shows that using side information is supplementary to observing more data.

\begin{figure}  
\begin{center}
\includegraphics[width=0.5\textwidth]{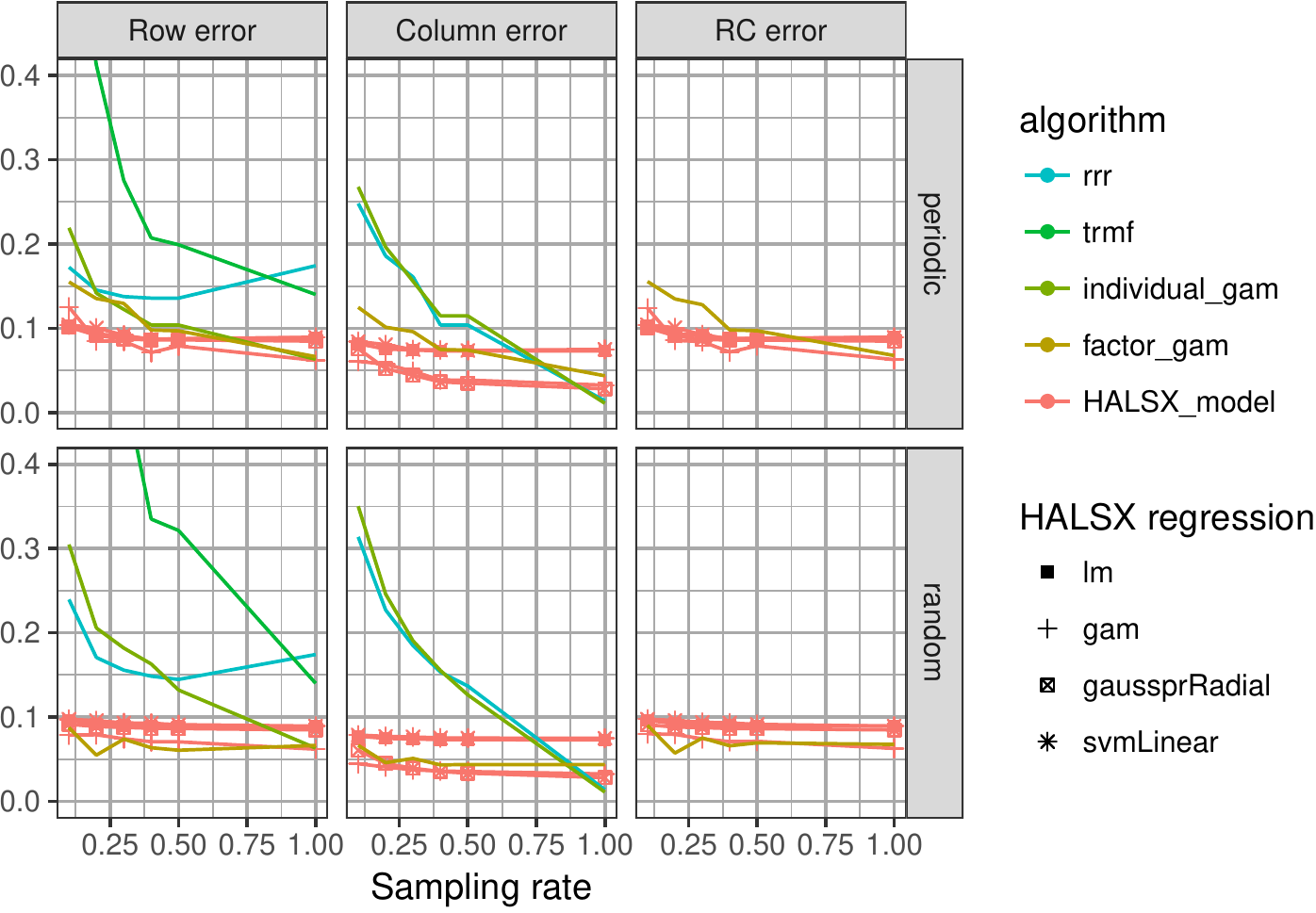}
\caption{Prediction performance on synthetic data}
\label{fig:synthetic}
\end{center}
\end{figure}
\begin{figure}  
\begin{center}
\includegraphics[width=0.5\textwidth]{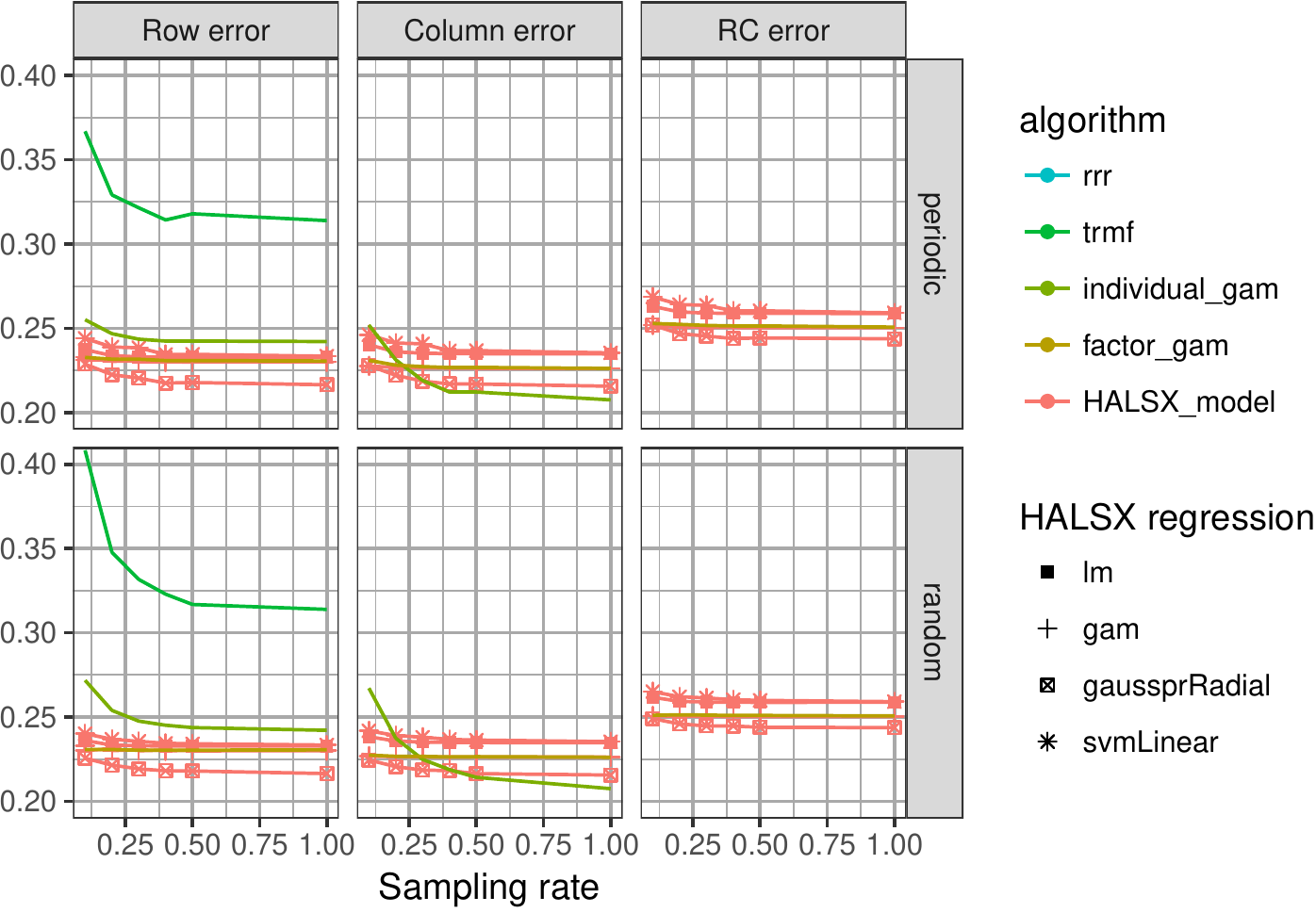}
\caption{Prediction performance on real French electricity data}
\label{fig:french}
\end{center}
\end{figure}

\begin{figure}  
\begin{center}
\includegraphics[width=0.5\textwidth]{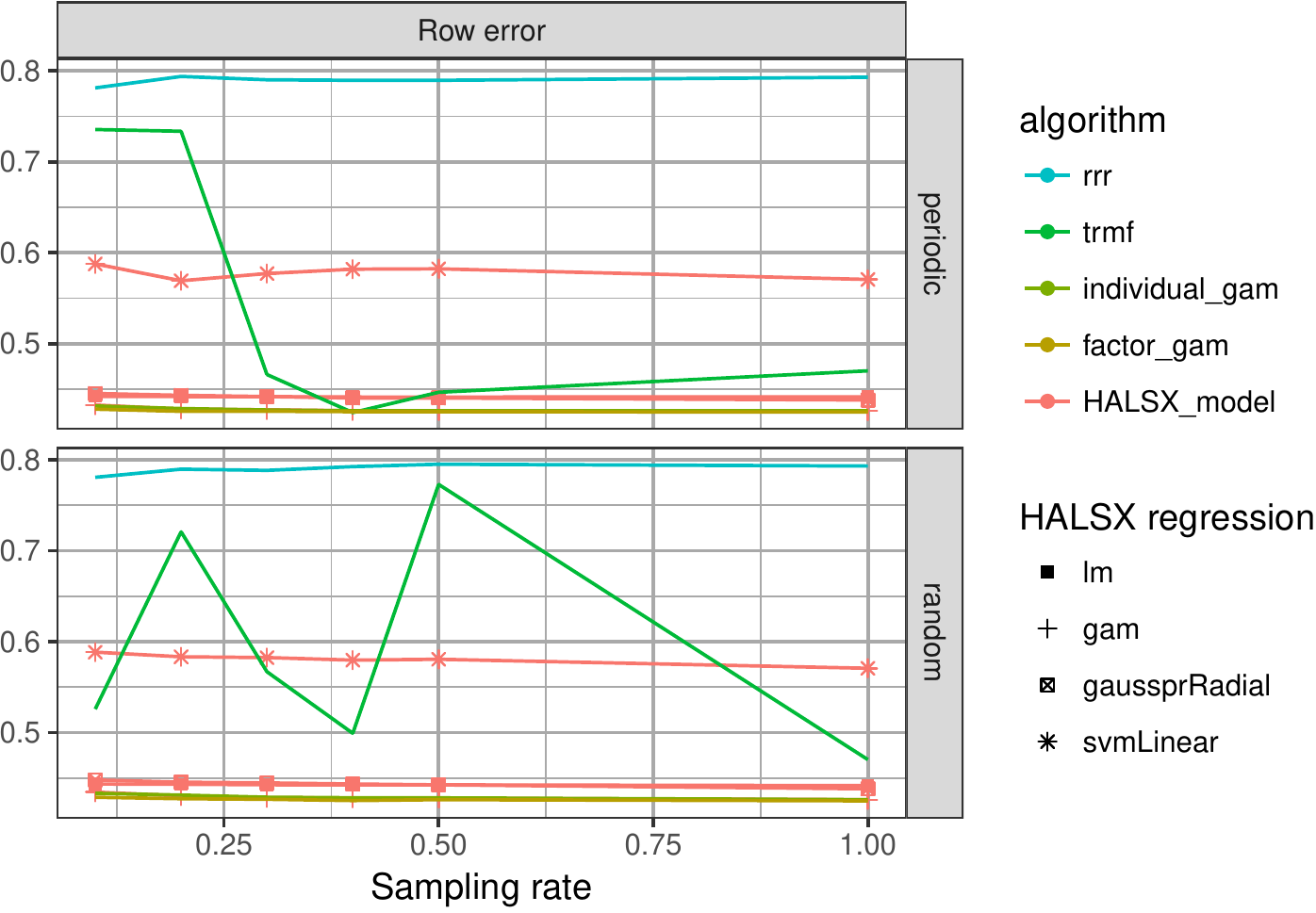}
\caption{Prediction performance on real Portuguese electricity data}
\label{fig:portuguese}
\end{center}
\end{figure}

\subsection{Performance on matrix completion mask}

On the MovieLens dataset, we use Algorithm \ref{algo:halsx} to perform matrix recovery and prediction with uniformly sampled matrix entries.
The sampling rates are~$10\%, 20\%, 30\%, 40\%, 50\%, 90\%$.

As is the case for temporal aggregate measurements, we use samples from the upper-left submatrix to estimate the model, evaluate matrix recovery on that submatrix, and evaluate row and/or column prediction errors.

Every method except for trmf is used in this setting.
As side information, we use the gender, the age, and the occupation of the users and the genre (a dimension-19 binary variable) of the movies.
For \textbf{grmf} \cite{rao_collaborative_2015}, we produce a graph where each individual is connected to its ten nearest neighbors with euclidean distance with the features.
As the parameters estimated in \textbf{grmf} is per user/movie, it can not be used to predict new individuals, even though it uses side information.

Figure \ref{fig:movielens complete} shows the recovery error on MovieLens data. 
We can see that in the low sampling rate case (10\%), HALSX\_model with \textbf{lm} works the best. 
In higher sampling rate cases, the NMF methods without side information (HALS and NeNMF) work the best, with HALSX\_model with \textbf{lm}, \textbf{gam} or \textbf{svmLinear} are second to best.
HALSX\_model with \textbf{gaussprRadial} does not work very in this case, as is the case with the other comparison methods.

\begin{figure}  
\begin{center}
\includegraphics[width=0.33\textwidth]{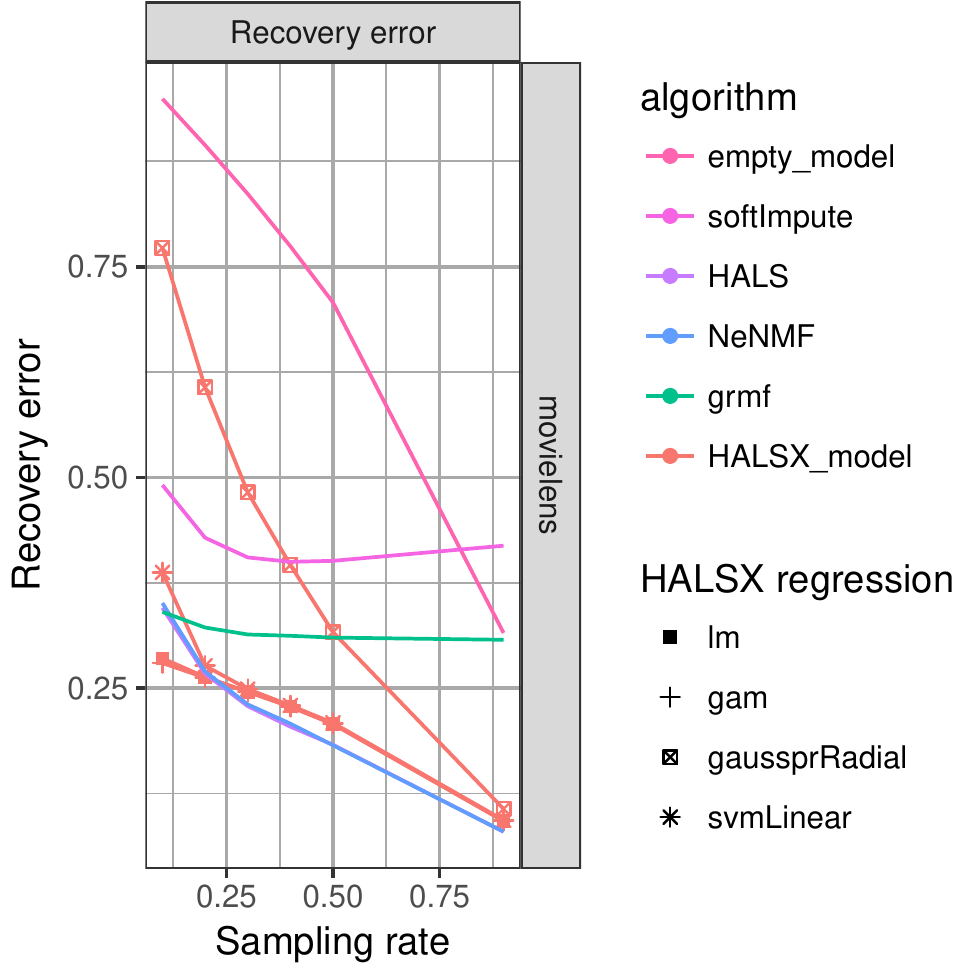}
\caption{Matrix completion performance on MovieLens 100K data}
\label{fig:movielens complete}
\end{center}
\end{figure}

Figure \ref{fig:movielens complete} shows the prediction error on MovieLens data for new users and/or new movies. 
The order of the variants of the HALSX\_models is conserved: \textbf{gam} and \textbf{lm} are the best, \textbf{svmLinear} is not very good for 10\%, but better with higher sampling rates, and \textbf{gaussprRadial} does not work well in this problem.
Otherwise, the factor\_gam method is slightly better for column predictions (new movies) in higher sampling rate cases, but worse in other cases. 
Both individual\_gam and rrr are much worse than the proposed methods.

\begin{figure}  
\begin{center}
\includegraphics[width=0.5\textwidth]{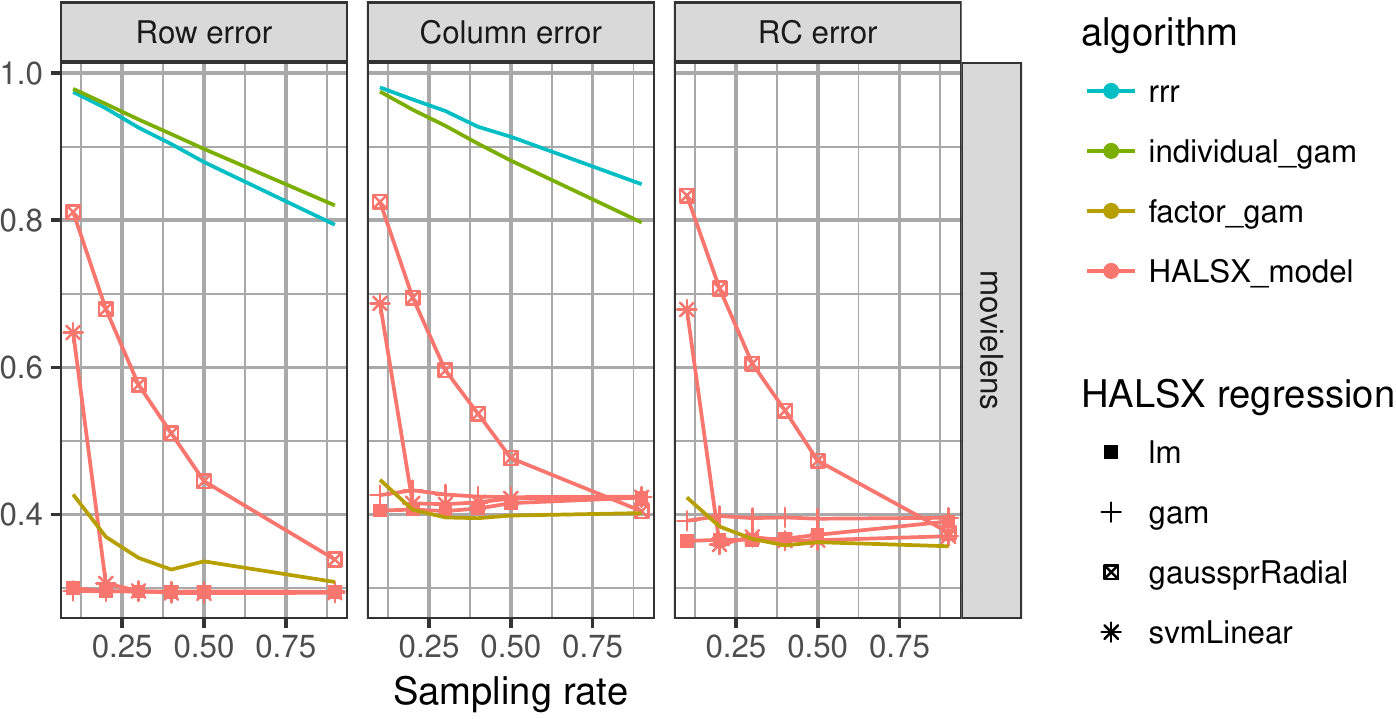}
\caption{Matrix completion performance for new rows and new columns on MovieLens 100K data}
\label{fig:movielens predict}
\end{center}
\end{figure}

\section{Conclusion}
We established a general approach for including side information on the columns and row in nonnegative matrix factorization methods, with general linear measurements.
On the theoretical front, we established a sufficient condition on the features for the factorization to be unique.
We deduced a general algorithm to solve the problem, and showed that the algorithm converges to a critical point in rather general conditions.
The proposed algorithm is compared in synthetic and real datasets in various sampling rates, and showed comparable or better empirical performance than the referenced methods.


The authors would like to thank Enedis for their help with the proprietary datasets.

\bibliographystyle{IEEEtran}
\bibliography{all,mypaper}

\end{document}